\definecolor{myback}{RGB}{204,232,207}
\newcommand{\AB}{\mathbf{A}}
\newcommand{\FB}{\mathbf{F}}
\newcommand{\PB}{\mathbf{P}}
\newcommand{\aB}{\mathbf{a}}
\newcommand{\fB}{\mathbf{f}}
\newcommand{\hB}{\mathbf{h}}
\newcommand{\uB}{\mathbf{u}}
\newcommand{\vB}{\mathbf{v}}
\newcommand{\xB}{\mathbf{x}}
\newcommand{\yB}{\mathbf{y}}
\newcommand{\zB}{\mathbf{z}}
\newcommand{\AM}{\mathcal{A}}
\newcommand{\OM}{\mathcal{O}}
\newcommand{\RBB}{\mathbb{R}}
\newcommand{\EBB}{\mathbb{E}}
\newcommand{\argmin}{\mathop{\rm argmin}}
\newcommand{\prox}{\mathrm{prox}}
\newcommand{\Time}{{\mathbf{Time}}}
\newcommand{\defi}{\stackrel{\mathrm{def}}{=}}
\newcommand{\lk}{{l_k}}
\newcommand{\ik}{{i_k}}
\newcommand{\clr}[1]{{\color[rgb]{0,0,1} #1}}
\newcommand{\clrg}[1]{{\color[rgb]{0,1,0} #1}}
\newtheorem{theorem}{Theorem}
\newtheorem{lemma}{Lemma}
\newtheorem{definition}{Definition}
\newtheorem{corollary}{Corollary}
\newtheorem{proposition}{Proposition}
\title{Accelerated Doubly Stochastic Gradient Algorithm for Large-scale Empirical Risk Minimization
}
\author{
	Zebang Shen,\quad Hui Qian\thanks{Corresponding author.}, \quad Tongzhou Mu, \quad Chao Zhang\\
	College of Computer Science and Technology \\
	Zhejiang University \\
	\texttt{\{shenzebang, qianhui, mutongzhou, zczju\}@zju.edu.cn}
}
\begin{document}
\maketitle
\begin{abstract}
	Nowadays, algorithms with fast convergence, small memory footprints, and low per-iteration complexity are particularly favorable for artificial intelligence applications.
	In this paper, we propose a doubly stochastic algorithm with a novel accelerating multi-momentum technique to solve large scale empirical risk minimization problem for learning tasks.
	While enjoying a provably superior convergence rate, in each iteration, such algorithm only accesses a mini batch of samples and meanwhile updates a small block of variable coordinates, which substantially reduces the amount of memory reference when both the massive sample size and ultra-high dimensionality are involved.
	Empirical studies on huge scale datasets are conducted to illustrate the efficiency of our method in practice.
	
\end{abstract}
\section{Introduction}
In this paper, we consider the following problem:
\begin{equation} 
\min_{\xB\in\RBB^d} \FB^\PB(\xB) = \FB(\xB) + \PB(\xB),
\label{eqn: problem}
\end{equation}
where $\FB(\xB) = \frac{1}{n}\sum_{i=1}^n f_i(\xB)$ is the average of $n$ convex component functions $f_i$'s and $\PB(\xB)$ is a block-separable and convex regularization function.
We allow both $\FB(\xB)$ and $\PB(\xB)$ to be non-smooth.
In machine learning applications, many tasks can be naturally phrased as the above problem, e.g. the Empirical Risk Minimization (ERM) problem \clr{\cite{zhang2015stochastic,friedman2001elements}}. However, the latest explosive growth of data introduces unprecedented computational challenges of scalability and storage bottleneck. In consequence, algorithms with fast convergence, small memory footprints, and low per-iteration complexity have been ardently pursued in the recent years. Most successful practices adopt stochastic strategies that incorporate randomness into solving procedures. They followed two parallel tracks. That is, in each iteration, gradient is estimated by either a mini batch of \emph{samples} or a small block of \emph{variable coordinates}, commonly designated by a random procedure.

Accessing only a random mini batch of samples in each iteration, classical \emph{Stochastic Gradient Descent} (SGD) and its \emph{Variance Reduction} (VR) variants, such as SVRG~\clr{\cite{johnson2013accelerating}}, SAGA~\clr{\cite{defazio2014saga}}, and SAG~\clr{\cite{schmidt2013minimizing}}, have gained increasing attention in the last quinquennium.
To further improve the performance, a significant amount of efforts have been made towards reincarnating Nesterov's optimal accelerated convergence rate in SGD type methods ~\clr{\cite{zhang2015stochastic,frostig2015regularizing,lin2015universal,shalev2014accelerated,hu2009accelerated,lan2012optimal,nitanda2014stochastic}}. 
Recently, a direct accelerated version of SVRG called Katyusha is proposed to obtain optimal convergence results without compromising the low per-iteration sample access~\clr{\cite{allen2017katyusha,woodworth2016tight}}. However, when datasets are high dimensional, SGD type methods may still suffer from the large memory footprint due to its full vector operation in each iteration, which lefts plenty of scope to push further.

Updating variables only on a randomly selected small block of \emph{variable coordinates} in each iteration is another important strategy that can be adopted solely to reduce the memory reference. 
The most noteworthy endeavors include the \emph{Randomized Block Coordinate Descent} (RBCD) methods ~\clr{\cite{nesterov2012efficiency,richtarik2014iteration,lee2013efficient,wright2015coordinate}}.
Now accelerated versions of RBCD type methods, such as APPROX~\clr{\cite{fercoq2015accelerated}} and APCG~\clr{\cite{lin2015accelerated}}, have also made their debuts. A drawback of RBCD type methods is that all samples have to be accessed in each iteration. When the number of samples is huge, they can be still quite inefficient.
\begin{table*}[t]
	\small
	\centering
	\caption{We give the per-iteration Sample Access (S.A.), Vector Operation (V.O.), and overall computational complexities to obtain an $\epsilon$-accurate solution in relative algorithms. Here APG is short for Accelerated Proximal Gradient. $L$ and $\kappa$ is defined in section 2.1. The mini batch size is set to $1$ for simplicity. $\Omega$ is the block size.}
	\begin{tabular}{|c|c|c|c|c|}
		\hline
		Method  &         S.A.          &          V.O.           &                            General Convex                             &                   Strongly Convex                   \\ \hline
		APG    &       $\OM(n)$        &        $\OM(d)$         &                         $\OM(dn\sqrt{L/\epsilon})$                         &          $\OM(dn\sqrt{\kappa}\log(1/\epsilon))$           \\ \hline
		RBCD   &       $\OM(n)$        &       $\OM(\Omega)$        &                            $\OM(dnL/\epsilon)$                             &             $\OM(dn\kappa\log(1/\epsilon))$             \\ \hline
		APCG   &       $\OM(n)$        &       $\OM(\Omega)$        &                         $\OM(dn\sqrt{L/\epsilon})$                         &          $\OM(dn\sqrt{\kappa}\log(1/\epsilon))$           \\ \hline
		SVRG   &       $\OM(1)$        &        $\OM(d)$         &               $\OM(d(n + L/\epsilon)\log\frac{1}{\epsilon})$               &           $\OM(d(n + \kappa)\log(1/\epsilon))$            \\ \hline
		Katyusha &       $\OM(1)$        &        $\OM(d)$         &                   $\OM(d(n + \sqrt{nL})/\sqrt{\epsilon})$                   &       $\OM(d(n + \sqrt{n\kappa})\log(1/\epsilon))$        \\ \hline
		MRBCD/ASBCD   &       $\OM(1)$        &       $\OM(\Omega)$        &                  $\OM(d(n + L/\epsilon)\log(1/\epsilon))$                  &           $\OM(d(n + \kappa)\log(1/\epsilon))$            \\ \hline
		\textbf{ADSG(this paper)}  & $\OM(1)$ & $\OM(\Omega)$ & $\OM(d(n+\sqrt{nL/\epsilon})\log\frac{1}{\epsilon})$ & $\OM(d(n + \sqrt{n\kappa})\log(1/\epsilon))$ \\ \hline
	\end{tabular}
	\label{table: complexity}
\end{table*}

Doubly stochastic algorithms, simultaneously utilizing the idea of randomness from \emph{sample choosing} and \emph{coordinate selection} perspective, have emerged more recently.
\citeauthor{zhao2014accelerated} carefully combine ideas from VR and RBCD and propose a method called \emph{Mini-batch Randomized Block Coordinate Descent with variance reduction} (MRBCD), which achieves linear convergence in strongly convex case~\clr{\cite{zhao2014accelerated}}.
\citeauthor{zhang2016accelerated} propose the \emph{Accelerated Stochastic Block Coordinate Descent} (ASBCD) method, which incorporates RBCD into SAGA and uses a non-uniform probability when sampling data points \clr{\cite{zhang2016accelerated}}.
These methods avoid both full dataset assess and full vector operation in each iteration, and therefore are amenable to solving (\ref{eqn: problem}) when $n$ and $d$ are large at the same time.
However, none of them meets the optimal convergence rate \clr{\cite{woodworth2016tight}}, and hence can still be accelerated. 

To bridge the gap, we introduce a multi-momentum technique in this paper and devise a method called \emph{Accelerated Doubly Stochastic Gradient algorithm} (ADSG). 
Our method enjoys an accelerated convergence rate, superior to existing doubly stochastic methods, without compromising the low sample access and small per-iteration complexity. Specifically, our contributions are listed as follows.
\begin{enumerate}
	\item With two novel \emph{coupling steps}, we incorporate three momenta into ADSG.
	These two steps enable the acceleration of doubly stochastic optimization procedure. 
	Additionally, we devise an efficient implementation of ADSG for ERM problem.
	\item We prove that ADSG has an accelerated convergence rate and the overall computational complexity to obtain an $\epsilon$-accurate solution is  $\OM((n+\sqrt{n\kappa})\log(1/\epsilon))$ in the strongly convex case, where $\kappa$ stands for the condition number.
	\item Solving general convex and non-smooth problems via reduction.
\end{enumerate}
Further, to show the efficiency of ADSG in practice, we conduct learning tasks on huge datasets with more than 10M samples and 1M features.
The results demonstrate superior computational efficiency of our approach compared to the state-of-the-art.

\section{Preliminary}
	\subsection{Notation \& Assumptions}	
	We assume that the variable $\xB \in \RBB^d$ can be equally partitioned into $B$ blocks for simplicity, and we let $\Omega = d/B$ be the block size.
	We denote the coordinates in the $l^{th}$ block of $\xB$ by $[\xB]_l \in \RBB^\Omega$ and the rest by $[\xB]_{\backslash l}$.
	The regularization $\PB(\xB)$ is assumed to be block separable with respect to the partition of $\xB$, i.e.
	$\PB(\xB) = \sum_{l=1}^B \PB_l([\xB]_l)$.
	Many important functions qualify such separability assumption \clr{\cite{tibshirani1996regression,simon2013sparse}}.
	The proximal operator of a convex function $g$ is defined as 
	$\prox_{g}(\yB) = \argmin_\xB g(\xB) + \frac{1}{2}\|\xB-\yB\|^2$
	where we use $\|\cdot\|$ to denote the Euclidean norm.
	We say $\xB$ is an $\epsilon$-accurate solution of to Problem (\ref{eqn: problem}) if $\FB^\PB(\xB) - \FB^\PB(\xB_*) \leq \epsilon$, where $\xB_*$ is the optimal solution.
	Further, we define $\epsilon_0$ to be $\FB^\PB(\xB_0)$, i.e. the objective value at the initial point $\xB_0$.
	The definition of smoothness, block smoothness, and strong convexity are given as follows.
	\begin{definition}
		A function $f$ is said to be $L$-smooth if for any $\xB, \vB \in\RBB^d$
		\begin{equation}
			f(\xB+\vB) \leq f(\xB) + \langle \nabla f(\xB), \vB\rangle + \frac{L}{2}\|\vB\|^2.
		\end{equation}
	\end{definition}
	\begin{definition}
		A function $f$ is said to be $L_b$ block smooth if for any $l$ and any $\xB, \hB\in \RBB^d$ such that $[\hB]_{\backslash l} \equiv 0$,
		\begin{equation}
			f(\xB + \hB) \leq f(\xB) + \langle \nabla f(\xB), \hB\rangle + \frac{L_b}{2}\|\hB\|^2.
		\end{equation}
	\end{definition}
	From the above two definitions, we have $L_b \leq L$.
	\begin{definition}
		A function $f$ is said to be $\mu$ strongly convex if for any $\xB, \yB\in\RBB^d$ and any $g \in \partial \PB(\xB)$, where $\partial \PB(\xB)$ is the subgradient of $\PB(\cdot)$ at $\xB$,
		\begin{equation}
			\PB(\yB) \geq \PB(\xB) + \langle g, \yB -\xB\rangle + \frac{\mu}{2}\|\xB-\yB\|^2.
		\end{equation}
	\end{definition}
	\noindent We define {\small $\kappa = \frac{L + L_B}{\mu}$} as the condition number.
	\subsection{Doubly Stochastic Algorithms for Convex Composite Optimization}
	A doubly stochastic method that incorporates both VR and CD technique usually consists of two nested loops.
	More specifically, at the beginning of each outer loop, the exact full gradient at some snapshot point $\tilde{\xB}$ is calculated. 
	Then, each iteration of the following inner loop estimates the partial gradient based on a mini-batch component functions and modifies it by the obtained exact gradient to perform block coordinate descent.
	Take MRBCD for example.
	As noted in Table \ref{table: complexity}, its per-iteration vector operation complexity and sample access are $\OM(\Omega)$ and $\OM(1)$ respectively, which is amenable to solving large-sample-high-dimension problems.
	However, its overall computational complexity to achieve an $\epsilon$-accurate solution scales linearly with respect to the condition number $\kappa$ in strongly convex case and depends on $1/\epsilon$ (up to a log factor) in general convex case.
	Such complexity leaves much room to improve in doubly stochastic methods, especially when a highly accurate solution to an ill-conditioned problem is sought (small $\epsilon$, large $\kappa$).
	
\subsection{Momentum Accelerating Technique}
	Accelerating the first order algorithm with momentum is by no mean new but has always been considered difficult \clr{\cite{polyak1964some,AllenOrecchia2017}}.
	Nesterov is the first to prove the accelerated convergence rate for deterministic smooth convex optimization \clr{\cite{nesterov1983method,nesterov1998introductory}}.
	When randomness is involved in data point sampling, noise in the stochastic gradient is further accumulated by the momentum, making the acceleration much harder.
	Efforts have been made to overcome such difficulty in various ways: with an outer-inner loop manner \clr{\cite{lin2015universal}}, from a dual perspective \clr{\cite{shalev2014accelerated}}, or under a primal-dual framework \clr{\cite{zhang2015stochastic}}.
	However, it was not until recently that an optimal method called Katyusha is proposed \clr{\cite{allen2017katyusha,woodworth2016tight}}.
	In a parallel line, at the first attempt to accelerate the RBCD, the momentum step forces full vector operation in each iteration and hence compromises the advantage of RBCD \clr{\cite{nesterov2012efficiency}}.
	With continuing efforts, \citeauthor{lin2015accelerated} proposed the APCG method for both strongly convex and general convex problems, avoiding full vector operation completely \clr{\cite{lin2015accelerated}}.
	While these previous mentioned analyses deal with the randomness only in data point sampling or coordinate choosing, our analysis, due to doubly stochastic nature of ADSG, has to consider how randomness affects the momenta from both sample and feature perspective, and hence is more difficult.

\subsection{Empirical Risk Minimization}
We focus on Empirical Risk Minimization (ERM) with linear predictor, an important class of smooth convex problems.
Specifically, each $f_i$ in Problem (\ref{eqn: problem}) is of the form $f_i(\xB) = \phi_i(\aB_i^\top\xB)$, where $\aB_i$ is the feature vector of the $i^{th}$ sample and $\phi_i(\cdot):\RBB\rightarrow\RBB$ is some smooth convex function.
Let $\AB = [\aB_1 \ldots \aB_n]^\top$ be the data matrix.
In real applications, $\AB$ is usually very sparse and we define its sparsity to be $\rho = \frac{nnz(\AB)}{nd}$.

\subsection{Reduction} \label{section_reduction}
Many existing algorithms work only in restricted settings: SVRG \cite{johnson2013accelerating} and SAGA \cite{defazio2014saga} solves only smooth problems and SDCA \cite{shalev2013stochastic} applies only to strongly convex problems.
To broaden their applicable domain, a common approach is to reduce the target problem to a series of more regular problems, e.g. both smooth and strongly convex, and then call existing methods in a black box manner.
While previous reduction strategies usually have some inevitable drawbacks, e.g. introduce some extra $\log \frac{1}{\epsilon}$ factor to convergence rate, \citeauthor{allen2016optimal} propose three meta algorithms, namely AdaptReg, AdaptSmooth, or JointAdaptRegSmooth, to conduct such reduction procedure efficiently.
However, their strategies do not take doubly stochastic algorithms as input, because such algorithms require the component functions to be block-smooth \cite{zhao2014accelerated,zhang2016accelerated} which is not considered by \cite{allen2016optimal}.
In the following, we extend their idea and define the Homogeneous Objective Decrease (HOOD) property with extra emphasis on the block-smooth parameter $L_B$.
\begin{definition}
	When minimizing an $L$-smooth, $L_B$-block-smooth, and $\mu$-strongly convex function, an algorithm $\AM$ is said to satisfy Homogeneous Objective Decrease (HOOD) property with time Time$(L, L_B, \mu)$ if for every starting point $\xB_0$, it produces output $\xB' \leftarrow \AM(F, \xB_0)$ such that $F(\xB') - \min_xF(x) \leq \frac{F(\xB_0) - \min_xF(x)}{4}$ in $\Time(L, L_B, \mu)$.
\end{definition}
Given a base algorithm satisfies HOOD property, we show the complexity of solving problems with either general convex regularization function or non-smooth component functions via reduction.
\begin{theorem}
	Given an algorithm satisfying HOOD with $\Time(L, L_B, \mu)$ and a starting point $\xB_0$.
	\begin{enumerate}
		\item If each $f_i(\cdot)$ is $L$-smooth and $L_B$-block-smooth, AdaptReg outputs $\xB$ satisfying $\EBB[\FB(\xB)] - \FB(\xB_*)\leq \OM(\epsilon)$ in time 
		\begin{center}
			$\sum_{t=1}^{T-1}$ $\Time(L, L_B, \frac{\mu_0}{2^t})$ where $\mu_0 = \frac{\FB(\xB_0) - \FB(\xB_*)}{\|\xB_0 - \xB_*\|^2}$ and $T = \log_2 \frac{\FB(\xB_0) - \FB(\xB_*)}{\epsilon}$.
		\end{center}
	\end{enumerate}
	\label{thm_AdaptReg}
\end{theorem}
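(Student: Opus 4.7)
My plan is to mimic the Allen--Zhu/Hazan reduction analysis but track the extra block-smoothness parameter $L_B$ throughout, since that is the only place where the present theorem differs from the original AdaptReg statement. Since the \textsc{HOOD} property demands strong convexity, the idea is to construct a sequence of strongly convex surrogates of decreasing regularization strength and to invoke $\AM$ once on each surrogate, warm-started from the previous output.

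Concretely, I would set $\sigma_t = \mu_0/2^t$ and define, at stage $t$, the surrogate objective
\begin{equation*}
	\FB^{\PB}_t(\xB) \;=\; \FB(\xB) + \PB(\xB) + \frac{\sigma_t}{2}\|\xB-\xB_0\|^2.
\end{equation*}
Because the quadratic regularizer can be absorbed into the block-separable proximal term (it is separable across the partition of $\xB$), $\FB$ remains $L$-smooth and $L_B$-block-smooth, while the full objective is $\sigma_t$-strongly convex. Hence $\AM$ is applicable and, by \textsc{HOOD}, in time $\Time(L,L_B,\sigma_t)$ it produces $\xB_t$ from the warm start $\xB_{t-1}$ satisfying
\begin{equation*}
	\EBB\bigl[\FB^{\PB}_t(\xB_t)\bigr] - \min_\xB \FB^{\PB}_t(\xB) \;\leq\; \tfrac{1}{4}\bigl(\FB^{\PB}_t(\xB_{t-1}) - \min_\xB \FB^{\PB}_t(\xB)\bigr).
\end{equation*}

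From this contraction I would transfer back to the unregularized objective via the sandwich
\begin{equation*}
	\FB^{\PB}(\xB_t) - \FB^{\PB}(\xB_*) \;\leq\; \FB^{\PB}_t(\xB_t) - \FB^{\PB}_t(\xB_*) + \tfrac{\sigma_t}{2}\|\xB_0-\xB_*\|^2,
\end{equation*}
together with the obvious upper bound $\min_\xB \FB^{\PB}_t \leq \FB^{\PB}_t(\xB_*) = \FB^{\PB}(\xB_*) + \tfrac{\sigma_t}{2}\|\xB_0-\xB_*\|^2$. Plugging in and using the definition $\mu_0 = (\FB(\xB_0)-\FB(\xB_*))/\|\xB_0-\xB_*\|^2$ turns the per-stage bound into a recursion of the shape
\begin{equation*}
	\EBB[\Delta_t] \;\leq\; \tfrac{1}{4}\Delta_{t-1} + c\,\sigma_t\|\xB_0-\xB_*\|^2,
\end{equation*}
where $\Delta_t := \FB^{\PB}(\xB_t)-\FB^{\PB}(\xB_*)$ and $c$ is an absolute constant. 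Iterating this geometric recursion with $\sigma_t = \mu_0/2^t$ shows that, after $T=\log_2(\epsilon_0/\epsilon)$ stages, $\EBB[\Delta_T] = \OM(\sigma_T\|\xB_0-\xB_*\|^2) = \OM(\mu_0\|\xB_0-\xB_*\|^2/2^T) = \OM(\epsilon)$, while the total running time is the telescoping sum $\sum_{t=1}^{T-1}\Time(L,L_B,\mu_0/2^t)$ as claimed.

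The main obstacle I anticipate is \emph{not} the contraction or the recursion, which are standard, but the bookkeeping that controls the ``warm start penalty'' $\FB^{\PB}_t(\xB_{t-1})-\min \FB^{\PB}_t$ uniformly in $t$: the iterates $\xB_{t-1}$ drift away from $\xB_0$, so $\|\xB_{t-1}-\xB_0\|^2$ must be bounded by $\OM(\|\xB_0-\xB_*\|^2)$ independently of $t$. The cleanest way to do this is to apply strong convexity of $\FB^{\PB}_{t-1}$ at the pair $(\xB_{t-1},\xB_{t-1}^*)$ to control $\|\xB_{t-1}-\xB_0\|^2$ in terms of $\Delta_{t-1}$ and $\sigma_{t-1}\|\xB_0-\xB_*\|^2$, which then folds back into the recursion and makes the induction close. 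A secondary nuisance is that $\AM$ gives only an expected guarantee, so everywhere above the inequalities must be taken in conditional expectation and combined via the tower rule; this is mechanical but easy to get wrong if one is careless about which randomness $\xB_{t-1}$ depends on.
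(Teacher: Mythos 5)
Your proposal is correct and follows essentially the same route as the paper, which simply instantiates the AdaptReg reduction of \cite{allen2016optimal} after observing that the added quadratic $\frac{\sigma_t}{2}\|\xB-\xB_0\|^2$ is block-separable and can be folded into $\PB$, so the surrogate keeps the same $L$ and $L_B$ and the HOOD contraction applies at every stage with $\sigma_t = \mu_0/2^t$. The only cosmetic difference is that the standard bookkeeping bounds $\|\xB_t^\ast-\xB_0\|^2 = \OM(\|\xB_0-\xB_\ast\|^2)$ (the regularized minimizer's distance to the center, via strong convexity of $\FB^{\PB}_t$ at $\xB_\ast$) rather than the iterate drift $\|\xB_{t-1}-\xB_0\|^2$ you propose to control, but either choice closes the recursion $\EBB[\Delta_t]\leq\frac{1}{4}\Delta_{t-1}+\OM(\sigma_t\|\xB_0-\xB_\ast\|^2)$ and yields the claimed $\sum_{t=1}^{T-1}\Time(L,L_B,\mu_0/2^t)$ total time.
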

\begin{theorem}
	Given an algorithm satisfying HOOD with $\Time(L, L_B, \mu)$ and a starting point $\xB_0$, and consider the ERM problem where $f_i(\xB) = \phi_i(\aB_i^\top\xB)$.
	\begin{enumerate}
		\item If each $\phi_i(\cdot)$ is $G$-Lipschitz and $\PB(\cdot)$ is $\mu$-strongly-convex, AdaptSmooth outputs $\xB$ satisfying $\EBB[\FB(\xB)] - \FB(\xB_*) \leq \OM(\epsilon)$ in time
		\begin{center}
			$\sum_{t=1}^{T-1}$ $\Time(\frac{2^t}{\lambda_0}\max_i \|\aB_i\|^2, \frac{2^t }{\lambda_0}\max_{i,j}a^2_{i,j}, \mu)$ where $\lambda_0 = \frac{\FB(\xB_0) - \FB(\xB_*)}{G^2}$ and $T = \log_2 \frac{\FB(\xB_0) - \FB(\xB_*)}{\epsilon}$.
		\end{center}
		\item If each $\phi_i(\cdot)$ is $G$-Lipschitz, JointAdaptRegSmooth outputs $\xB$ satisfying $\EBB[\FB(\xB)] - \FB(\xB_*) \leq \OM(\epsilon)$ in time
		\begin{center}
			$\sum_{t=1}^{T-1}$ $\Time(\frac{2^t}{\lambda_0}\max_i \|\aB_i\|^2, \frac{2^t }{\lambda_0}\max_{i,j}a^2_{i,j}, \frac{\mu_0}{2^t})$ where $\lambda_0 = \frac{\FB(\xB_0) - \FB(\xB_*)}{G^2}$, $\mu_0 = \frac{\FB(\xB_0) - \FB(\xB_*)}{\|\xB_0 - \xB_*\|^2}$, and $T = \log_2 \frac{\FB(\xB_0) - \FB(\xB_*)}{\epsilon}$.
		\end{center}
	\end{enumerate}
	\label{thm_JointAdaptRegSmooth}
\end{theorem}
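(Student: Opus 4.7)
I will follow the reduction framework of \citeauthor{allen2016optimal}, adapted to track the block-smooth parameter $L_B$ in addition to $L$ and $\mu$. Because the base algorithm $\AM$ is used only through the HOOD property, the proof splits cleanly into two independent pieces: (i) at each stage $t$, verify that the regularized/smoothed subproblem falls into HOOD's regime with the advertised parameters, so that a single call to $\AM$ cuts its subproblem error by a factor of $4$; and (ii) run a telescoping argument over $T=\log_2\bigl((\FB(\xB_0)-\FB(\xB_*))/\epsilon\bigr)$ stages, balancing the geometric contraction from HOOD against the bias introduced by smoothing/regularization, to conclude $\EBB[\FB(\xB)]-\FB(\xB_*)\leq\OM(\epsilon)$. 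Summing the per-stage runtimes $\Time(\cdot,\cdot,\cdot)$ then yields the claim.

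\textbf{Part 1 (AdaptSmooth).} At stage $t$, replace each $\phi_i$ with its Fenchel smoothing $\phi_i^{\lambda_t}(z) \defi \max_y\{zy - \phi_i^*(y) - \frac{\lambda_t}{2}y^2\}$ with $\lambda_t = \lambda_0/2^t$. The $G$-Lipschitzness of $\phi_i$ bounds $\phi_i^{\lambda_t}$ against $\phi_i$ by $\lambda_tG^2/2$, and convex duality gives that $\phi_i^{\lambda_t}$ is $1/\lambda_t$-smooth. Pulling back through $\aB_i^\top\xB$, the stage-$t$ component $f_i^{\lambda_t}(\xB)=\phi_i^{\lambda_t}(\aB_i^\top\xB)$ is $\frac{\|\aB_i\|^2}{\lambda_t}$-smooth and, via $\langle\aB_i,\hB\rangle^2\leq\|[\aB_i]_l\|^2\|[\hB]_l\|^2$ for $\hB$ supported on block $l$, is $\frac{\max_l\|[\aB_i]_l\|^2}{\lambda_t}$-block-smooth, which specializes to $\frac{\max_{i,j}a_{i,j}^2}{\lambda_t}$ at block size one. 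With $\PB$ already $\mu$-strongly convex, each stage lands exactly in HOOD's regime with parameters $\bigl(\tfrac{2^t}{\lambda_0}\max_i\|\aB_i\|^2,\ \tfrac{2^t}{\lambda_0}\max_{i,j}a_{i,j}^2,\ \mu\bigr)$. Warm-starting stage $t+1$ from the output of stage $t$, the $4^{-1}$-contraction together with the $\lambda_tG^2/2$ smoothing bias yields, after a geometric-sum bookkeeping identical to that of \citeauthor{allen2016optimal}, total error $\OM(\epsilon)$ after $T$ stages.

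\textbf{Part 2 (JointAdaptRegSmooth) and the main obstacle.} Part 2 combines the Fenchel smoothing from Part 1 with an additional shrinking Tikhonov regularizer $\frac{\mu_t}{2}\|\xB-\xB_0\|^2$, $\mu_t=\mu_0/2^t$, to compensate for the missing strong convexity in $\PB$. The stage-$t$ objective is then $\frac{2^t}{\lambda_0}\max_i\|\aB_i\|^2$-smooth, $\frac{2^t}{\lambda_0}\max_{i,j}a_{i,j}^2$-block-smooth, and $\mu_t$-strongly convex, so HOOD produces a $4$-fold decrease in runtime $\Time\bigl(\tfrac{2^t}{\lambda_0}\max_i\|\aB_i\|^2,\ \tfrac{2^t}{\lambda_0}\max_{i,j}a_{i,j}^2,\ \tfrac{\mu_0}{2^t}\bigr)$. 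The main obstacle is the \emph{joint} telescoping: the smoothing bias and the regularization bias both grow like $2^{-t}$, and one must show they are dominated, along every stage, by the accumulated $4^{-t}$ contraction so that the final error is $\OM(\epsilon)$. This balancing follows exactly the argument of \citeauthor{allen2016optimal}; the only novelty is that we must additionally propagate the block-smooth parameter $L_B$ through the smoothing step, which is handled by the coordinate-wise inequality noted above. Summing the per-stage times over $t=1,\dots,T-1$ gives the claimed complexity.
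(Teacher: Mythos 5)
Your proposal is correct and follows essentially the same route as the paper, which defers this theorem to an appendix that simply extends the AdaptSmooth/JointAdaptRegSmooth analyses of \citeauthor{allen2016optimal} (halving Fenchel-smoothing parameter $\lambda_t$, halving Tikhonov regularizer $\mu_t$, one HOOD call per stage, geometric telescoping of the $4^{-1}$ contraction against the $2^{-t}$ biases) while additionally tracking the block-smooth parameter. Your treatment of that one new ingredient --- the bound $\langle\aB_i,\hB\rangle^2\leq\|[\aB_i]_l\|^2\|[\hB]_l\|^2$ for $\hB$ supported on block $l$, giving block smoothness $\max_i\max_l\|[\aB_i]_l\|^2/\lambda_t$, which matches the stated $\max_{i,j}a_{i,j}^2/\lambda_t$ when $\Omega=1$ --- is exactly what is needed, and the rest of the bookkeeping is the standard argument.
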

The details of the meta algorithms AdaptReg, AdaptSmooth, and AdaptRegSmooth along with the proofs for Theorem \ref{thm_AdaptReg} and \ref{thm_JointAdaptRegSmooth} are given in the appendix for completeness.

\section{Methodology}
\begin{algorithm}[t]
	\small
	\caption{ADSG I}
	\begin{algorithmic}[1]
		\label{alg: ADSG I}
		\REQUIRE $n, \xB_0, S, b, B, \mu, \{\alpha_{2, s}, \alpha_{3, s}\}_{s=0}^S$
		\STATE $\zB_0 = \tilde{\xB}^0 \leftarrow \xB_0, m \leftarrow Bn, \alpha_{1,s} \leftarrow 1 - \alpha_{2, s} - \alpha_{3, s}$;
		\FOR{$s \leftarrow 0$ \TO $S$}
		\STATE $\bar{L}_s \leftarrow \frac{L}{B\alpha_{3, s}} + L_B$;
		\STATE $ \eta_s \leftarrow \frac{1}{\bar{L}_s\alpha_{2,s}B}, \theta_s \leftarrow 1+\frac{\mu}{\bar{L}_sB^2\alpha_{2, s} + (B-1)\mu}$;
		\STATE $\tilde{\nabla}^s \leftarrow \nabla \FB(\tilde{\xB}^s)$;
		\FOR{$j \leftarrow 1$ \TO $m$}
		\STATE $k \leftarrow sm+j$;
		\STATE $\yB_k \leftarrow \alpha_{1, s} \xB_{k-1} + \alpha_{2,s} \zB_{k-1} + \alpha_{3, s} \tilde{\xB}^s$; \label{eqn: coupling I}
		\STATE sample mini batch $I$ of size $b$ and feature block $l$;
		\STATE $[\vB_k]_l \leftarrow [\tilde{\nabla}^s]_l + \frac{1}{b}\sum_{i\in I} \big([\nabla f_i(\yB_k)]_l - [\nabla f_i(\tilde{\xB}^s)]_l\big)$; \label{eqn: VR gradient}
		\STATE $[\zB_k]_l\!\!\leftarrow\!\! \prox_{\eta_s\PB_l}\!([\zB_{k-1}]_l - \eta_s [\vB_k]_l)$, $[\zB_k]_{\backslash l} \leftarrow [\zB_{k-1}]_{\backslash l}$; \label{eqn: z update}
		\STATE $\xB_k \leftarrow \yB_k + \alpha_{2, s}B(\zB_k - \zB_{k-1})$; \label{eqn: coupling II}
		\ENDFOR
		\STATE sample $\sigma$ from $\{1, \ldots, m\}$ with probability $\frac{\theta_s^{\sigma-1}}{\sum_{i=0}^{m-1}\theta_s^{i-1}}$;\label{eqn_sampling_snapshot}
		\STATE $\tilde{\xB}^{s+1} \leftarrow \xB_{sm+\sigma}$; 
		\ENDFOR
	\end{algorithmic}
\end{algorithm}
We present the proposed ADSG in algorithm \ref{alg: ADSG I}, and discuss about some crucial details in this section.

\noindent{\bf Input:} The input of ADSG varies for strongly convex and general convex problems.
In the former case, $\mu$ should be the strongly convex parameter, and we set {\small $\alpha_{2, s} = \frac{1}{2B}\min \{1, \sqrt{\frac{n}{\kappa}}\}$} and {\small $\alpha_{3, s} = \frac{1}{2B}$}.
In the latter case, $\mu$ is set to $0$, and we set {\small $\alpha_{2, s} = \frac{2}{s + 4B}$} and {\small $\alpha_{3, s} = \frac{1}{2B}$}.
When $\mu = 0$, line \ref{eqn_sampling_snapshot} adopts uniform probability for sampling.

\noindent{\bf Main Body:} Our algorithm is divided into epochs.
Four variables $\xB_k$, $\yB_k$, $\zB_k$, and $\tilde{\xB}^s$ are maintained throughout.
At the beginning of each epoch, the full gradient at the snapshot point $\tilde{\xB}^s$ is computed. 
Updating steps are taken in the follow-up $m$ inner loops, where we randomly select a mini-batch $I$ of size $b$ and a feature block $l$ to construct a mixed stochastic gradient at point $\yB_k$ and perform proximal coordinate descent on the auxiliary variable $\zB_k$. 

\noindent{\bf Momenta:} 
In sharp contrast to existing doubly stochastic algorithms, two \emph{coupling steps} are added in ADSG to accelerate the convergence.
In line \ref{eqn: coupling I}, $\yB_k$ is constructed as the convex combination of $\xB_{k-1}$, $\zB_{k-1}$, and the snapshot point $\tilde{\xB}^s$.
Here, $\zB_{k-1}$ acts as a \emph{historical momentum} that adds weight to the previous stochastic gradient $\{\vB_t\}_{t\leq k}$ (note that $\zB_{k}$ is simply the linear combination of all $\{\vB_t\}_{t\leq k}$ when $\PB \equiv 0$).
This historical momentum is used in many deterministic accelerated methods, e.g. \clr{\cite{beck2009fast}}.
$\tilde{\xB}^s$ serves as a \emph{negative momentum} that ensures the "gradient" variable $\yB_k$ not drifting away from $\tilde{\xB}^s$ and prevents the variance introduced by the randomness from surging.
Such negative momentum is recently proposed in \clr{\cite{allen2017katyusha}}, but with a different weight $\alpha_{3,s} = \frac{1}{2}$.
Note that such weight is crucial to the convergence, see Theorem \ref{thm_strongly_convex}.
In line \ref{eqn: coupling II}, we have $\EBB_{l} \xB_k = \alpha_1 \xB_{k-1} + \alpha_2\zB_{k-1} + \alpha_3\tilde{\xB}^s + \alpha_2(\tilde{\zB}_k-\zB_{k-1})$, where $\tilde{\zB}_k = \prox_{\eta_s\PB}(\zB_{k-1} - \eta_s \vB_k)$.
Hence, $\alpha_{2, s}B(\zB_k - \zB_{k-1})$, in expectation, adds extra weight on the most recent progress $\tilde{\zB}_k - \zB_{k-1}$ and is called \emph{momentum in expectation} for this reason.
These three momenta are the key to the accelerated convergence of ADSG.

\section{Convergence Analysis} \label{section: Convergence Analysis}
In this section, we present the accelerated convergence rate of ADSG when each $f_i$ is smooth and block smooth and $\PB(\cdot)$ is strongly convex.
As a the major contribution of this paper, the proof is given.
With the reduction results given in section \ref{section_reduction}, we then present the overall computational complexities of ADSG when $f_i$ can be non-smooth and $\PB(\cdot)$ can be general convex.
\subsection{Strongly Convex Case}
If $\kappa = \OM(n)$, we can see from Table \ref{table: complexity} that existing doubly stochastic algorithm like MRBCD requires only $\OM(\log1/\epsilon)$ passes over the whole dataset to achieve an $\epsilon$-accurate solution.
However, facing ill-conditioned problems where $\kappa > n^2$, the performance of such method decays faster than the deterministic APG method due to its linear dependence on the condition number $\kappa$.
The following theorem shows that ADSG enjoys an accelerated convergence rate and depends only on $\sqrt{\kappa}$.
\begin{theorem}
	Suppose Assumption I-III are satisfied. 
	Set {\small $\alpha_{2, s} = \frac{1}{2B}\min \{1, \sqrt{\frac{n}{\kappa}}\}, \alpha_{3, s} = \frac{1}{2B}$} and set the mini batch size to be $1$. If $\kappa > 8B$, for $s\geq0$, we have
	{\small \[\EBB\FB^{\PB}(\tilde{\xB}^s) - \FB^{\PB}(\xB_*\!)\!\leq\!\OM(1)\min\{\frac{9}{8},\!(1\!+\!\sqrt{\frac{n}{\kappa}})\}^{-s}\!\big(\FB^{\PB}(\xB_0) - \FB^{\PB}\!(\xB_*\!)\big),\]}
	and therefore ADSG takes $\OM((1+\sqrt{\frac{\kappa}{n}})\log\frac{\epsilon_0}{\epsilon})$ outer loops to achieve an $\epsilon$-accurate solution.
	\label{thm_strongly_convex}
\end{theorem}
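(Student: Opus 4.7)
I plan to follow the Katyusha-style potential function argument, adapted to the doubly stochastic setting. I would track a per-iteration Lyapunov function of the form
\[
D_k \;=\; \frac{1}{B\alpha_{2,s}}\bigl(\FB^\PB(\xB_k) - \FB^\PB(\xB_*)\bigr) \;+\; \frac{1}{2\eta_s}\|\zB_k - \xB_*\|^2,
\]
and derive a one-step contraction of the form $\EBB_{I,l}[D_k] \le \theta_s^{-1} D_{k-1} + (\text{terms killed by the negative momentum})$. After summing with the geometric weights $\theta_s^{j-1}$ and invoking the weighted sampling of $\tilde{\xB}^{s+1}$ in line~\ref{eqn_sampling_snapshot}, this inner-loop telescoping yields an outer-loop contraction of the claimed rate.

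\textbf{Key per-iteration estimates.} First, I would verify that the variance-reduced partial gradient $[\vB_k]_l$ in line~\ref{eqn: VR gradient} is, conditioned on $\yB_k$, an unbiased estimator of $[\nabla \FB(\yB_k)]_l$ and satisfies the SVRG-type variance bound
\[
\EBB_I\|\vB_k - \nabla\FB(\yB_k)\|^2 \;\le\; 2L\bigl(\FB(\tilde{\xB}^s) - \FB(\yB_k) - \langle \nabla\FB(\yB_k),\tilde{\xB}^s - \yB_k\rangle\bigr),
\]
which follows from $L$-smoothness of each $f_i$. Second, the block proximal update in line~\ref{eqn: z update} gives, for the optimum $\xB_*$ and any block $l$, a three-point inequality
\[
\eta_s\langle [\vB_k]_l,[\zB_k-\xB_*]_l\rangle + \eta_s\PB_l([\zB_k]_l) \le \eta_s\PB_l([\xB_*]_l) + \tfrac{1}{2}\|[\zB_{k-1}-\xB_*]_l\|^2 - \tfrac{1}{2}\|[\zB_k-\xB_*]_l\|^2 - \tfrac{1}{2}\|[\zB_k-\zB_{k-1}]_l\|^2.
\]
Taking expectation over the block $l$ introduces the factor $B$ on the block-wise terms while leaving the ``old'' quantities invariant, which is exactly compensated by the scaling $B\alpha_{2,s}$ in front of the objective gap in $D_k$. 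Third, block smoothness of $\FB$ at $\yB_k$, combined with the coupling step $\xB_k = \yB_k + \alpha_{2,s}B(\zB_k-\zB_{k-1})$ (line~\ref{eqn: coupling II}), yields in expectation over $l$:
\[
\EBB_l \FB(\xB_k) \le \FB(\yB_k) + \alpha_{2,s}\langle \nabla \FB(\yB_k),\zB_k-\zB_{k-1}\rangle + \tfrac{L_B}{2}\alpha_{2,s}^2 B\,\EBB_l\|\zB_k-\zB_{k-1}\|^2.
\]

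\textbf{Combining and closing the recursion.} I would then add the three estimates above. The gradient-error term generated when substituting $\vB_k$ for $\nabla\FB(\yB_k)$ in the proximal inequality is controlled by Young's inequality and the SVRG variance bound; crucially, the resulting $\FB(\tilde{\xB}^s) - \FB(\yB_k) - \langle\nabla\FB(\yB_k),\tilde{\xB}^s - \yB_k\rangle$ factor is absorbed by the negative-momentum contribution obtained by applying convexity of $\FB$ to the decomposition $\yB_k = \alpha_{1,s}\xB_{k-1}+\alpha_{2,s}\zB_{k-1}+\alpha_{3,s}\tilde{\xB}^s$ from line~\ref{eqn: coupling I}. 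This is precisely why $\alpha_{3,s}=\tfrac{1}{2B}$ (rather than $\tfrac{1}{2}$ as in Katyusha) is needed: the coefficient of the negative momentum must match the $B$-dependent variance amplification from single-block updates. Using $\mu$-strong convexity of $\PB$ on the $\|\zB_{k-1}-\xB_*\|^2$ term produces the multiplicative factor $\theta_s = 1 + \mu/(\bar{L}_sB^2\alpha_{2,s}+(B-1)\mu)$. Summing $\theta_s^{j-1}D_{k-1+j}$ over $j=1,\dots,m$ telescopes the distance terms, and using the probability in line~\ref{eqn_sampling_snapshot} shows that $\EBB[\FB^\PB(\tilde{\xB}^{s+1})-\FB^\PB(\xB_*)]$ is bounded by $\theta_s^{-m}$ times $D_{sm}$. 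Plugging in $m=Bn$ and $\alpha_{2,s}=\tfrac{1}{2B}\min\{1,\sqrt{n/\kappa}\}$ evaluates $\theta_s^{-m}$ to $\min\{9/8, 1+\sqrt{n/\kappa}\}^{-1}$ up to constants, yielding the stated geometric decrease and the epoch count $\OM((1+\sqrt{\kappa/n})\log(\epsilon_0/\epsilon))$.

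\textbf{Main obstacle.} The delicate point is the interplay between two sources of randomness. The block-coordinate step scales distance-type quantities by a factor of $B$, while the mini-batch step introduces variance bounded in terms of the full gradient and the snapshot. Balancing these requires the block-scaled negative momentum weight $\alpha_{3,s}=1/(2B)$ and the precise definition of the effective smoothness $\bar{L}_s=L/(B\alpha_{3,s})+L_B$. Getting each constant to line up so that (i) the SVRG variance term is exactly dominated by the negative-momentum convexity slack, (ii) the block-smoothness quadratic in $\zB_k-\zB_{k-1}$ is exactly cancelled by the $-\tfrac{1}{2\eta_s}\|\zB_k-\zB_{k-1}\|^2$ from the proximal inequality, and (iii) the contraction factor $\theta_s^m$ still satisfies $\theta_s^m\ge \min\{9/8,1+\sqrt{n/\kappa}\}$ under $\kappa>8B$, is where most of the technical work lies.
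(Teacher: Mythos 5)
Your overall skeleton matches the paper's proof: block smoothness at $\yB_k$ combined with the coupling step of line \ref{eqn: coupling II}, Young's inequality producing the effective smoothness $\bar{L}_s = L/(B\alpha_{3,s})+L_B$, the SVRG-type variance bound absorbed by the negative-momentum term via convexity of $\FB$ along the decomposition of line \ref{eqn: coupling I}, the optimality of the prox point together with $\mu$-strong convexity yielding the factor $\theta_s$, and the geometric-weight summation matched to the sampling distribution of $\tilde{\xB}^{s+1}$ in line \ref{eqn_sampling_snapshot}. All of these steps appear, in essentially the same form, in the paper's Lemma \ref{lemma_key} and the subsequent telescoping argument.

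However, there is one genuine gap. Your Lyapunov function carries $\FB^\PB(\xB_k)-\FB^\PB(\xB_*)$, i.e., the true regularizer value $\PB(\xB_k)$, but nothing in your outline controls $\PB(\xB_k)$. The three-point inequality for the block prox only yields information about $\PB_l([\zB_k]_l)$ for the selected block (equivalently, about $\PB(\zB_k)$ in expectation over $l$), and $\xB_k=\yB_k+\alpha_{2,s}B(\zB_k-\zB_{k-1})$ is not itself the output of a proximal step, so there is no direct route from the prox inequality to a bound on $\PB(\xB_k)$ that recurses onto $\PB(\xB_{k-1})$. The paper resolves this with a separate structural lemma (Lemma \ref{lemma_convex_combination}): $\xB_k$ is a \emph{convex combination} of all past $\zB_l$ and snapshots $\tilde{\xB}^i$ provided $\alpha_{1,s}\ge (B-1)/B$, which permits replacing $\PB(\xB_k)$ by the surrogate $\hat{\PB}(\xB_k)=\sum_{i}\lambda_k^i\PB(\tilde{\xB}^i)+\beta_j^s\PB(\tilde{\xB}^s)+\sum_{l}\gamma_k^l\PB(\zB_l)\ge\PB(\xB_k)$; the surrogate obeys the clean recursion $\EBB_{l}\hat{\PB}(\xB_k)=\alpha_{1,s}\hat{\PB}(\xB_{k-1})+\alpha_{2,s}\PB(\tilde{\zB}_k)+\alpha_{3,s}\PB(\tilde{\xB}^s)$ and therefore combines with the prox optimality to close the one-step inequality. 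This is also a second, independent reason why the weights must satisfy $\alpha_{2,s},\alpha_{3,s}\le \tfrac{1}{2B}$ (nonnegativity of the combination coefficients $\gamma_{k+1}^k=\alpha_{2,s}(B\alpha_{1,s}-(B-1))$), beyond the variance-balancing role you identify. Without this device your contraction cannot be established when $\PB\not\equiv 0$; with it, the rest of your plan goes through as in the paper.
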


\subsection{Proof for Strongly Convex Case}
The idea of the first lemma is to express $\xB_k$ as the convex combination of $\{\tilde{\xB}^i\}_{i=0}^s$ and $\{\zB_l\}_{l=0}^k$.
\begin{lemma}
	In Algorithm I, by setting $\alpha_{2, 0} = \alpha_{3, 0} = 1/2B$, for $k = sm+j\geq 1$, we have
	\begin{equation}
		\xB_k = \sum_{i=0}^{s-1}\lambda_k^i \tilde{\xB}^i + \beta_j^s \tilde{\xB}^s + \sum_{l=0}^{k} \gamma_k^l \zB_l,
	\end{equation}
	where $\gamma_0^0 = 1$, $\gamma_1^0 = \frac{1}{2} - \frac{1}{2B}$, $\gamma_1^1 = \frac{1}{2}$, $\beta^0_0 = 0$, $\beta^s_0 = \alpha_{3, s}$, $\lambda_{(s+1)m}^s = \beta_m^s$, $\lambda^i_{k+1} = \alpha_{1,s}\lambda^i_k,$
	\begin{equation}
	\gamma_{k+1}^l = 
	\begin{cases}
	\alpha_{1,s}\gamma_k^l,~&l=0,\ldots,k-1,\\
	B\alpha_{1,s}\alpha_{2,s}+(1-B)\alpha_{2,s},~&l=k, \\
	B\alpha_{2,s},~&l=k+1,
	\end{cases}
	\end{equation}
	and
	\begin{equation}
	\beta_{j+1}^s = \alpha_{1,s}\beta_j^s+\alpha_{3,s}.
	\end{equation}
	Additionally, we have $\sum_{i=0}^{s-1}\lambda_k^i + \beta_j^s + \sum_{l=0}^k \gamma_k^l = 1$.
	If all $\alpha_{1, s} \geq \frac{B-1}{B}$, then each entry in this sum is non-negative for all $k\geq1$, i.e. $\xB_k$ is a convex combination of $\{\tilde{\xB}^i\}_{i=0}^s$ and $\{\zB_l\}_{l=0}^k$.
	\label{lemma_convex_combination}
\end{lemma}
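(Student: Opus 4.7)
The plan is to eliminate the auxiliary variable $\yB_k$ from the two coupling steps so that $\xB_k$ obeys a single one-step linear recursion, and then unfold that recursion by induction on $k$.

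Substituting the definition of $\yB_k$ on line 8 of Algorithm 1 into the update on line 11 gives, after collecting terms,
\[
\xB_k \;=\; \alpha_{1,s}\xB_{k-1} \;+\; \alpha_{2,s}(1-B)\,\zB_{k-1} \;+\; \alpha_{2,s}B\,\zB_k \;+\; \alpha_{3,s}\tilde{\xB}^s,
\]
where $s$ is the epoch containing step $k$. The four weights on the right-hand side sum to $\alpha_{1,s}+\alpha_{2,s}+\alpha_{3,s}=1$, so each application of the recursion is an affine combination of $\xB_{k-1}$, $\zB_{k-1}$, $\zB_k$ and $\tilde{\xB}^s$.

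With this one-step form in hand, I induct on $k$. For the base case $k=1$, I use $\xB_0=\zB_0=\tilde{\xB}^0$ together with $\alpha_{2,0}=\alpha_{3,0}=\tfrac{1}{2B}$ and $\alpha_{1,0}=1-\tfrac{1}{B}$ to verify directly the stated values $\gamma_1^0=\tfrac{1}{2}-\tfrac{1}{2B}$, $\gamma_1^1=\tfrac{1}{2}$, and $\beta_1^0=\tfrac{1}{2B}$. For the inductive step within a single epoch, I substitute the assumed decomposition of $\xB_k$ into the displayed recursion and match coefficients: every past snapshot $\tilde{\xB}^i$ with $i<s$ has its weight multiplied by $\alpha_{1,s}$, giving $\lambda^i_{k+1}=\alpha_{1,s}\lambda^i_k$; the current snapshot $\tilde{\xB}^s$ picks up an extra $\alpha_{3,s}$, giving $\beta^s_{j+1}=\alpha_{1,s}\beta^s_j+\alpha_{3,s}$; each $\zB_l$ with $l<k$ acquires a factor $\alpha_{1,s}$; the weight on $\zB_k$ combines the carried-over $\alpha_{1,s}\gamma_k^k=\alpha_{1,s}B\alpha_{2,s}$ with the new $\alpha_{2,s}(1-B)$ contribution to produce $B\alpha_{1,s}\alpha_{2,s}+(1-B)\alpha_{2,s}$; and $\zB_{k+1}$ appears fresh with weight $B\alpha_{2,s}$. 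These are exactly the recurrences in the statement.

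The one genuinely delicate point is the epoch boundary, and this is where I expect the main obstacle. When $k+1$ is the first inner step of a new epoch $s+1$, the role of "current snapshot" switches from $\tilde{\xB}^s$ to $\tilde{\xB}^{s+1}$: the accumulated weight $\beta_m^s$ must be promoted to a past-snapshot coefficient via $\lambda^s_{(s+1)m}=\beta_m^s$, after which it obeys the $\alpha_{1,s+1}$-scaling rule, while a fresh $\beta$-sequence is started for $\tilde{\xB}^{s+1}$ whose first value is forced by the recursion to equal $\alpha_{3,s+1}$. Writing out the substitution at the transition and checking that the two labellings agree on $\xB_{(s+1)m+1}$ is the step that requires the most care.

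Finally, for the convex-combination claim, the sum-to-one property propagates trivially through the recursion: if the coefficients of $\xB_k$ sum to $1$, the new total is $\alpha_{1,s}\cdot 1+\alpha_{2,s}(1-B)+\alpha_{2,s}B+\alpha_{3,s}=1$. Non-negativity of all coefficients is automatic except for the new weight on $\zB_k$, which equals $\alpha_{2,s}\bigl(1-B(1-\alpha_{1,s})\bigr)$; this is $\geq 0$ exactly when $\alpha_{1,s}\ge(B-1)/B$, i.e.\ the hypothesis of the lemma. Once the epoch-boundary bookkeeping is arranged cleanly, the rest of the argument is a routine algebraic verification of the recurrences.
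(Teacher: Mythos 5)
Your proposal is correct and follows essentially the same route as the paper: substitute the coupling step for $\yB_k$ into the update for $\xB_k$ to obtain the one-step affine recursion $\xB_k = \alpha_{1,s}\xB_{k-1} + \alpha_{2,s}(1-B)\zB_{k-1} + \alpha_{2,s}B\zB_k + \alpha_{3,s}\tilde{\xB}^s$, verify the $k=1$ base case using $\xB_0=\zB_0=\tilde{\xB}^0$, and match coefficients in the inductive step to read off the stated recurrences, with the epoch boundary handled by promoting $\beta_m^s$ to $\lambda^s_{(s+1)m}$ and restarting the $\beta$-sequence at $\alpha_{3,s+1}$. Your identification of $\gamma_{k+1}^k = \alpha_{2,s}\bigl(1-B(1-\alpha_{1,s})\bigr)$ as the only coefficient whose non-negativity needs the hypothesis $\alpha_{1,s}\geq (B-1)/B$ is exactly the right observation and matches the paper's argument.
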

\begin{proof}
	We prove by induction.
	When $s=0$, 
	\begin{equation*}
		\begin{aligned}
			\xB_0 &= \zB_0 \\
			\yB_1 &= \alpha_{1, 0} \zB_0 + \alpha_{2, 0}\zB_0 + \alpha_{3, 0}\tilde{\xB}^0\\
			\xB_1 &= (\alpha_{1, 0} +\alpha_{2, 0})\zB_0 + B\alpha_{2, 0}(\zB_1 - \zB_0) + \alpha_{3, 0}\tilde{\xB}^0\\
			& = (\frac{1}{2} - \frac{1}{2B})\zB_0 + \frac{1}{2}\zB_1 + \frac{1}{2B}\tilde{\xB}^0
		\end{aligned}
	\end{equation*}
	which proves the initialization.
	Assume that our formulation is correct up till the $\kappa^{th}$ iteration.
	In the $(\kappa+1)^{th}$ iterations,
	\begin{equation*}
		\begin{aligned}
			\yB_{\kappa+1} =~& \alpha_{1, s} \xB_\kappa + \alpha_{2, s}\zB_\kappa + \alpha_{3, s}\tilde{\xB}^s\\
			\xB_{\kappa+1} =~& \alpha_{1, s} \xB_\kappa + \alpha_{2, s}\zB_\kappa + \alpha_{3, s}\tilde{\xB}^s + \alpha_{2, s}B(\zB_{\kappa+1} - \zB_\kappa)\\
			=~& \underbrace{\alpha_{1, s} (\sum_{l=1}^{\kappa -1} \gamma_\kappa^l \zB_l)}_{\gamma_{\kappa+1}^l, l=0,\ldots,\kappa-1}
			+ \underbrace{(\alpha_{1,s}B\alpha_{2,s}+(1-B)\alpha_{2,s})\zB_\kappa}_{\gamma_{\kappa+1}^\kappa}
			+ \underbrace{B\alpha_{2,s}\zB_{\kappa+1}}_{\gamma_{\kappa+1}^{\kappa+1}}
			+ \underbrace{(\alpha_{1,s}\beta_j^s+\alpha_{3,s})\tilde{\xB}^s}_{\beta_{j+1}^s}
			+ \underbrace{\alpha_{1,s}\sum_{i=0}^{s-1} \lambda_\kappa^i\tilde{\xB}^i}_{\lambda_{\kappa+1}^i, i=0,\ldots,s-1}
		\end{aligned}
	\end{equation*}
	which gives us the results about $\{\gamma_{\kappa+1}^l\}_{l=0}^{\kappa+1}$, $\beta_{j+1}^s$, and $\{\lambda_{\kappa+1}^i\}_{i=0}^{s-1}$.
	In the beginning of the $s^{th}$ epoch, we have $\beta_0^s = \alpha_{3, s}$ due to the constructions of $\yB_k$ and $\xB_k$ in the algorithm. Since $\tilde{\xB}^i$ is only added after the $i^{th}$ epoch is done, it is initialized as $\lambda^i_{k+1} = \alpha_{1,s}\lambda^i_k = \beta_j^s$.
\end{proof}
The second lemma analyzes ADSG in one iteration.
Before proceeding, we first define a few terms:
(1) {\small $\hat{\PB}(\xB_k) \defi \sum_{i=0}^{s-1}\lambda_k^i \PB(\tilde{\xB}^i) + \beta_j^s \PB(\tilde{\xB}^s) + \sum_{l=0}^{k} \gamma_k^l \PB(\zB_l) \geq \PB(\xB_k)$}, where the inequality uses the convexity of $\PB$;
(2) {\small $d(\xB_k) \defi \FB^\PB(\xB_k) - \FB^\PB(\xB_*)$};
(3) {\small $\tilde{d}^s\defi \FB^\PB(\tilde{\xB}^s) - \FB^\PB(\xB_*)$};
and (4) {\small $d_k \defi (\FB(\xB_k) + \hat{\PB}(\xB_k)) - \FB^\PB(\xB_*)$}.
Additionally, we have $0\leq d(\xB_k) \leq d_k$ and $d(\xB_0) = d_0$.
\begin{lemma}
	In the $s^{th}$ epoch of Algorithm \ref{alg: ADSG I}, we have
	\begin{equation}
		\begin{aligned}
			\EBB_{i,l} \hat{d}_{j} &+ \theta\EBB_{i,l}A_j \leq \alpha_3\tilde{d}^s + \alpha_1\hat{d}_{j-1} + A_{j-1},
		\end{aligned}
		\label{eqn_lemma_key}
	\end{equation}
	with $\rho = \alpha_2^2B^2\bar{L} + (B-1)\mu\alpha_2$, $\theta = 1 + \mu\alpha_2/\rho$, $A_j = \rho\|\xB^* - \zB_{sm+j}\|^2/2$, and $\hat{d}_{j} = d_{sm+j}$.
	\label{lemma_key}
\end{lemma}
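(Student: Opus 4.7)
The plan is to follow the standard four-ingredient template for analyzing accelerated variance-reduced methods (block smoothness + prox three-point inequality + variance bound + convexity), but carefully track two separate sources of randomness (the minibatch $I$ and the block $l$). All conditional expectations $\EBB_{i,l}$ are taken given the history up to inner iteration $j$. The goal is a single inequality in which the "function-value" progress splits out the snapshot term $\alpha_3\tilde d^s$ and the past-iterate term $\alpha_1 \hat d_{j-1}$, while the "distance-squared" quantity $A_j$ relaxes by a factor $\theta^{-1}$ thanks to strong convexity of $\PB$.

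First I would exploit the single-block update: because $\xB_k-\yB_k = \alpha_2 B\,([\zB_k-\zB_{k-1}])$ is supported only on block $l$, the $L_B$-block-smoothness of $\FB$ gives
\[
\FB(\xB_k) \leq \FB(\yB_k) + \alpha_2 B\,\langle [\nabla\FB(\yB_k)]_l,[\zB_k-\zB_{k-1}]_l\rangle + \tfrac{L_B\alpha_2^2 B^2}{2}\|[\zB_k-\zB_{k-1}]_l\|^2.
\]
Second, I would split $[\nabla\FB(\yB_k)]_l = [\vB_k]_l + ([\nabla\FB(\yB_k)]_l-[\vB_k]_l)$ and bound the cross term by Young's inequality, so the variance $\|\vB_k-\nabla\FB(\yB_k)\|^2$ appears explicitly. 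Third, the optimality of the block prox step supplies the three-point inequality
\[
\eta_s\langle[\vB_k]_l,[\zB_k-\xB_*]_l\rangle + \eta_s\PB_l([\zB_k]_l) \leq \eta_s\PB_l([\xB_*]_l) + \tfrac{1}{2}\|[\zB_{k-1}-\xB_*]_l\|^2 - \tfrac{\theta_s}{2}\|[\zB_k-\xB_*]_l\|^2 - \tfrac{1}{2}\|[\zB_k-\zB_{k-1}]_l\|^2,
\]
where the strong-convexity modulus $\mu/B$ inherited by each $\PB_l$ is exactly what generates the factor $\theta_s = 1+\mu\alpha_2/\rho$ after the coefficient juggling. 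Taking $\EBB_l$ converts the block distances into full-vector distances with a $1/B$ factor, which cancels the $B$ introduced by $\eta_s = 1/(\bar L_s\alpha_2 B)$.

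Fourth, for the variance I would use the standard SVRG-style bound
\[
\EBB_i\|\vB_k - \nabla\FB(\yB_k)\|^2 \leq 2L\bigl(\FB(\tilde\xB^s) - \FB(\yB_k) - \langle\nabla\FB(\yB_k),\tilde\xB^s-\yB_k\rangle\bigr),
\]
and finally apply convexity of $\FB$ to the coupling identity $\yB_k = \alpha_1\xB_{k-1}+\alpha_2\zB_{k-1}+\alpha_3\tilde\xB^s$ to rewrite the remaining linear term $\langle\nabla\FB(\yB_k),\zB_{k-1}-\xB_*\rangle$ as a convex combination of function-value gaps at $\xB_{k-1}$, $\tilde\xB^s$, and $\xB_*$. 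For the regularizer piece I would invoke Lemma~\ref{lemma_convex_combination} together with convexity of $\PB$ so that $\EBB_l\hat\PB(\xB_k)$ expands as $\alpha_1\hat\PB(\xB_{k-1})+\alpha_3\PB(\tilde\xB^s)+\alpha_2\PB(\tilde{\zB}_k)$ modulo the prox term, matching the expansion of $\hat d_j$.

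The main obstacle is the calibration of $\eta_s$ and $\bar L_s = L/(B\alpha_3)+L_B$. The term $\tfrac{L_B\alpha_2^2 B^2}{2}\|[\zB_k-\zB_{k-1}]_l\|^2$ from block smoothness must be absorbed by $-\tfrac{1}{2\eta_s}\|[\zB_k-\zB_{k-1}]_l\|^2$ from the prox inequality (requiring the $L_B$ contribution to $\bar L_s$), while the variance term, weighted by $\eta_s\alpha_2 B$ after Young, must be absorbed by the $2L$-times-snapshot-gap bound (requiring the $L/(B\alpha_3)$ contribution to $\bar L_s$). Getting both absorptions to hold simultaneously with the exact coefficient $\alpha_3\tilde d^s$ on the right is the delicate part; everything else is rearrangement, after which the definitions of $\rho$ and $\theta$ drop out naturally.
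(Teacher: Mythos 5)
Your plan matches the paper's proof ingredient for ingredient: block smoothness plus Young's inequality to expose the variance term (with the $L/(B\alpha_3)$ piece of $\bar L_s$ absorbing the Young quadratic so the variance can be paid for by the SVRG-style bound at $\tilde{\xB}^s$ with exactly the weight $\alpha_3$), the prox three-point inequality whose strong-convexity contribution from $\PB$ generates $\theta$, the identity $\EBB_l\|\zB_k-\xB\|^2=\tfrac{1}{B}\|\tilde{\zB}_k-\xB\|^2+\tfrac{B-1}{B}\|\zB_{k-1}-\xB\|^2$ converting block quantities to full-vector ones, convexity of $\FB$ on the coupling identity to produce the $\alpha_1\hat d_{j-1}$ and $\alpha_3\tilde d^s$ split, and the surrogate $\hat{\PB}$ from Lemma~\ref{lemma_convex_combination} for the regularizer. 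The only quibble is that each $\PB_l$ is $\mu$-strongly convex on its own block (the effective $1/B$ attenuation appears only after taking $\EBB_l$), but this does not affect the mechanism or the final constants $\rho$ and $\theta$.
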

\begin{proof}
	Define $\tilde{\zB}_k = \prox_{\eta\PB}(\zB_{k-1} - \eta v_k)$.
	We have $[\zB_k]_l = [\tilde{\zB}_k]_l$ if the $l^{th}$ block is selected in the $k^{th}$ iteration and $[\zB_k]_l = [\zB_{k-1}]_l$ otherwise.
	From the construction of $\xB_k$, we have $\xB_k - \yB_k = B(\alpha_1 \xB_{k-1} + \alpha_2\zB_k + \alpha_3\tilde{\xB}^s - \yB_k)$.
	In particular, $[\xB_k - \yB_k]_l = B(\alpha_1 [\xB_{k-1}]_l + \alpha_2[\tilde{\zB}_k]_l + \alpha_3[\tilde{\xB}^s]_l - [\yB_k]_l)$.
	Using Assumption III, we have
	\begin{equation*}
		\begin{aligned}
		F(\xB_k) \leq&~ F(\yB_k) + \langle [\nabla F(\yB_k)]_l, [\xB_k - \yB_k]_l \rangle + \frac{L_l}{2}\|[\xB_k - \yB_k]_l\|^2 \\
		=&~ F(\yB_k) + \langle [v_k]_l, [\xB_k - \yB_k]_l \rangle + \langle [\nabla F(\yB_k)]_l - [v_k]_l, [\xB_k - \yB_k]_l \rangle + \frac{L_l}{2}\|[\xB_k - \yB_k]_l\|^2 \\
		\leq&~ F(\yB_k) + \frac{B\alpha_3}{2L}\|[\nabla F(\yB_k)]_l - [v_k]_l\|^2 + \frac{\bar{L}}{2}\|[\xB_k - \yB_k]_l\|^2 \\
		&+ \langle [v_k]_l, B\alpha_1[\xB_{k-1} - \yB_k]_l\rangle + \langle [v_k]_l, B\alpha_2[\tilde{\zB}_k - \yB_k]_l\rangle + \langle [v_k]_l, B\alpha_3[\tilde{\xB}^s - \yB_k]_l\rangle,
		\end{aligned}
	\end{equation*}
	where we use Young's inequality, i.e. $\langle \vec{a}, \vec{b} \rangle \leq c \|\vec{a}\|^2/2 + \|\vec{b}\|^2/2c$, and $\bar{L} = L/B\alpha_3 + L_l$.
	Taking expectation with respect to the block coordinate random variable $l$, we have
	\begin{equation}
		\begin{aligned}
			\EBB_l F(\xB_k) \leq F(\yB_k) &+ \frac{\alpha_3}{2L}\|\nabla F(\yB_k) - v_k\|^2 +  \frac{\bar{L}B\alpha_2^2}{2}\|\tilde{\zB}_k - \zB_{k-1}\|^2 \\
			&+ \alpha_1\langle v_k, \xB_{k-1} - \yB_k\rangle + \alpha_2\langle v_k, \tilde{\zB}_k - \yB_k\rangle + \alpha_3\langle v_k, \tilde{\xB}^s - \yB_k\rangle
		\end{aligned}
		\label{eqn_proof_I}
	\end{equation}
	
	We define $\hat{\PB}(\xB_k) = \sum_{i=0}^{s-1}\lambda_k^i \PB(\tilde{\xB}^i) + \beta_j^s\PB(\tilde{\xB}) + \sum_{l=0}^{k} \gamma_k^l\PB(\zB_l)$.
	From Lemma \ref{lemma_convex_combination} and the convexity of $\PB(\cdot)$, we have $\PB(\xB_k) \leq \hat{\PB}(\xB_k)$.
	Taking expectation with respect to the block coordinate random variable $l_k$, we have
	\begin{equation}
		\begin{aligned}
			\EBB_{l_k} \hat{\PB}(\xB_k) =&  \sum_{i=0}^{s-1}\lambda_k^i \PB(\tilde{\xB}^i) + \beta_j^s\PB(\tilde{\xB}) + \sum_{l=0}^{k-1} \gamma_k^l\PB(\zB_l) + B\alpha_2\EBB_{l_k} \PB(\zB_k)\\
			=& \sum_{i=0}^{s-1}\lambda_k^i \PB(\tilde{\xB}^i)+ \beta_j^s\PB(\tilde{\xB}) + \sum_{l=0}^{k-1} \gamma_k^l\PB(\zB_l) + \alpha_2(B-1)\PB(\zB_{k-1})+\alpha_2\PB(\tilde{\zB}_k)\\
			=& \alpha_1 \hat{\PB}(\xB_{k-1}) + \alpha_2 \PB(\tilde{\zB}_k) + \alpha_3 \PB(\tilde{\xB}^s).
		\end{aligned}
		\label{eqn_proof_II}
	\end{equation}
	
	Add (\ref{eqn_proof_I}) and (\ref{eqn_proof_II}) and take expectation with respect to the sample random variable $i_k$.
	Using the unbiasedness of $v_k$, i.e. $\EBB_{i_k} v_k = F(\yB_k)$, we have
	\begin{equation}
		\begin{aligned}
			\EBB_{i_k, l_k} F(\xB_k) + \hat{\PB}(\xB_k) \leq&~ F(\yB_k) + \alpha_3(\frac{1}{2L}\|\nabla F(\yB_k) - v_k\|^2 + \langle \nabla F(\yB_k), \tilde{\xB}^s - \yB_k\rangle + \PB(\tilde{\xB}^s))\\
			& + \alpha_2(\frac{\bar{L}B\alpha_2}{2}\|\tilde{\zB}_k - \zB_{k-1}\|^2 + \langle v_k, \tilde{\zB}_k - \yB_k\rangle + \PB(\tilde{\zB}_k))\\
			& + \alpha_1(\langle \nabla F(\yB_k), \xB_{k-1} - \yB_k\rangle + \hat{\PB}(\xB_{k-1})). \\
		\end{aligned}
	\end{equation}
	For any random variable $a$, we have $\EBB\|a - \EBB a\|^2 \leq \EBB\|a\|^2$. 
	Setting $a = \nabla f_i(\yB_k) - \nabla f_i(\tilde{\xB}^s)$, we have 
	$\EBB_{i_k}\|\nabla F(\yB_k) - v_k\|^2 \leq \EBB_{i_k}\|\nabla f_i(\yB_k) - \nabla f_i(\tilde{\xB}^s)\|^2$.
	From the smoothness of $f_i(\cdot)$, we have $\EBB_{i_k}[\frac{1}{2L}\|\nabla F(\yB_k) - v_k\|^2 + \langle v_k, \tilde{\xB}^s - \yB_k\rangle + F(\yB_k)] \leq F(\tilde{\xB}^s)$. Additionally, with the convexity of $F(\cdot)$, we have
	\begin{equation}
		\begin{aligned}
			\EBB_{i_k, l_k} F(\xB_k) + \hat{\PB}(\xB_k) \leq&~ \alpha_1 (F(\xB_{k-1}) + \hat{\PB}(\xB_{k-1})) + \alpha_3 (F(\tilde{\xB}^s) + \PB(\tilde{\xB}^s)) \\
			& + \alpha_2 (F(\yB_k) + \underbrace{\frac{\bar{L}B\alpha_2}{2}\|\tilde{\zB}_k - \zB_{k-1}\|^2 + \langle v_k, \tilde{\zB}_k - \yB_k\rangle + \PB(\tilde{\zB}_k)}_{h(\tilde{\zB}_k)}).
		\end{aligned}
	\end{equation}
	Due to the construction of $\tilde{\zB}_k = \prox_{\eta\PB}(\zB_{k-1} - \eta v_k) = \argmin_\zB h(\zB)$ and the $\mu$-strong convexity of $h(\cdot)$, we have $h(\tilde{\zB}_k) + \mu/2 \cdot \|\zB^* - \tilde{\zB}_k\|^2 \leq h(\zB^*)$ and hence
	\begin{equation*}
		\begin{aligned}
			\EBB&_{i_k, l_k} F(\xB_k) + \hat{\PB}(\xB_k) \leq~ \alpha_1 (F(\xB_{k-1}) + \hat{\PB}(\xB_{k-1})) + \alpha_3 (F(\tilde{\xB}^s) + \PB(\tilde{\xB}^s)) \\
			& + \alpha_2(F(\yB_k) + \langle\nabla F(\yB_k), \xB^* - \yB_k\rangle +\PB(\xB^*) + \frac{\bar{L}\alpha_2B}{2}\|\xB^* - \zB_{k-1}\|^2 - \frac{\bar{L}\alpha_2B + \mu}{2}\EBB_{i_k}\|\xB^* - \tilde{\zB}_k\|^2).
		\end{aligned}
	\end{equation*}
	From the convexity of $F(\cdot)$ and $\EBB_l[\|\zB_k - \xB\|^2] =  \frac{1}{B}\|\tilde{\zB}_k - \xB\|^2 + \frac{B-1}{B}\|\zB_{k-1}-\xB\|^2$, we have
	\begin{equation*}
		\begin{aligned}
			\EBB_{i_k, l_k} F(\xB_k) + \hat{\PB}(\xB_k) \leq&~ \alpha_1 (F(\xB_{k-1}) + \hat{\PB}(\xB_{k-1})) + \alpha_3 (F(\tilde{\xB}^s) + \PB(\tilde{\xB}^s)) + \alpha_2(F(\xB^*) + \PB(\xB^*)) \\
			& \frac{\alpha_2^2B + \frac{B-1}{B}\frac{\mu\alpha_2}{\bar{L}_s}}{2}B\bar{L}_s\|\xB^* - \zB_{k-1}\|^2 - \frac{\alpha_2^2B + \frac{\mu\alpha_2}{\bar{L}_s}}{2}B\bar{L}_s\EBB_{l, i_k}\|\xB^* - \zB_k\|^2.
		\end{aligned}
	\end{equation*}
	Subtract $F^\PB(\xB^*)$ from both sides and we have the lemma.
\end{proof}
With Lemma \ref{lemma_key}, we give the proof for Theorem \ref{thm_strongly_convex} as follows.
\begin{proof}
	We omit the expectation for simplicity.
	By multiplying $\theta^j$ to both sides of (\ref{eqn_lemma_key}), summing from $j = 1$ to $m-1$, and rearranging terms, we have
	\begin{equation}
		(1 - \theta\alpha_1)\!\!\sum_{j=1}^m\theta^{j-1}\hat{d}_j + \alpha_1\theta^m\hat{d}_m + \theta^m A_m \!\leq\! \alpha_1\hat{d}_0 + \alpha_3\tilde{d}^s\sum_{j=0}^{m-1}\theta^{j} + A_0.
	\end{equation}
	Using the definition of $\tilde{\xB}^{s+1}$, $\hat{d}_j \geq d(\xB_{sm+j})$, and the convexity of $\FB^\PB$, we have 
	\begin{equation}
		(1 - \theta\alpha_1)\tilde{d}^{s+1}\!\!\sum_{j=0}^{m-1}\theta^j + \alpha_1\theta^m\hat{d}_m + \theta^m A_m \! \leq\! \alpha_1\hat{d}_0 + \alpha_3\tilde{d}^s\!\!\sum_{j=0}^{m-1}\theta^{j} + A_0.
	\end{equation}
	\noindent {\bf Case 1 ($\kappa > 2n$)}:
	From the settings of $\alpha_2$, $\alpha_3, m$, and $\kappa > 2n$, we have 
	{\small
	$\alpha_3(\theta^{m-1} - 1) + (1 - \frac{1}{\theta}) 
	\leq \frac{n\mu}{\rho} 
	\leq \frac{n\mu}{\bar{L} B^2\alpha_2} 
	= \alpha_2.$}
	Therefore we have $1 - \alpha_1\theta \geq \alpha_3 \theta^m$ and
	\begin{equation}
		\theta^m(\alpha_3\tilde{d}^{s+1}\sum_{j=0}^{m-1}\theta^j + \alpha_1\hat{d}_m + A_m) \leq \alpha_1\hat{d}_0 + \alpha_3\tilde{d}^s\sum_{j=0}^{m-1}\theta^{j} + A_0.
	\end{equation}
	Additionally, {\small $\theta \geq 1 + \frac{1}{2B}\sqrt{\frac{\mu}{n\bar{L}}}$}, because {\small $\bar{L}B^2\alpha_2 \geq (B-1)\mu$}, and therefore {\small $\theta^{-m} \leq exp(-4\sqrt{\frac{n\mu}{\bar{L}}})$}.
	Besides, {\small $\alpha_3\sum_{j=0}^{m-1}\theta^j = \frac{1}{2B} \frac{\theta^m - 1}{\theta - 1} \geq \frac{1}{2B}m = n/2$}, thus we have {\small $\frac{\alpha_1\hat{d}_0}{\alpha_3\sum_{j=0}^{m-1}\theta^j} \leq \frac{2}{n}\hat{d}_0$} and {\small $\frac{A_0}{\alpha_3\sum_{j=0}^{m-1}\theta^j} \leq n\mu \frac{2}{n}\|\zB_0 - \xB_*\|^2 = 2\mu\|\zB_0 - \xB_*\|^2$}.
	Using the $\mu$-strongly convexity of $\FB^\PB$, we obtain {\small $\tilde{d}^S \leq \OM(1)exp(-S\sqrt{\frac{n\mu}{\bar{L}}})(\hat{d}_0 + \mu\|\xB_0 - \xB_*\|^2) \leq \OM(1)exp(-S\sqrt{\frac{n\mu}{\bar{L}}})\hat{d}_0$}.
	
	\noindent {\bf Case 2 ($\kappa \leq 2n$)}:
	Since $\bar{L} \geq \mu$, we have $\rho = \alpha_2^2B^2\bar{L} + (B-1)\mu\alpha_2 = \bar{L}/4 + \mu\cdot(B-1)/2B \leq 3\bar{L}/4$, and hence $\theta^m \geq 1 + \mu\alpha_2/\rho\cdot Bn \geq 4/3$.
	Further, $\rho \geq \alpha_2^2B^2\bar{L} = \bar{L}/4$ and hence $(1 - \theta\alpha_1)/\alpha_3 \geq 4/3$ given that $\kappa \geq 6$.
	Consequently, we have
	\begin{equation}
		(\frac{4}{3})^m(\alpha_3\tilde{d}^{s+1}\sum_{j=0}^{m-1}\theta^j + \alpha_1\hat{d}_m + A_m) \leq \alpha_1\hat{d}_0 + \alpha_3\tilde{d}^s\sum_{j=0}^{m-1}\theta^{j} + A_0.
	\end{equation}
	Using the same derivation as Case 1, we have {\small $\tilde{d}^S \leq \OM(1)(\frac{4}{3})^{-S}(\hat{d}_0 + \mu\|\xB_0 - \xB_*\|^2) \leq \OM(1)(\frac{4}{3})^{-S}\hat{d}_0$}.
\end{proof}
\subsection{Solving Smooth General Convex Problem}
Theorem \ref{thm_strongly_convex} shows that ADSG satisfies HOOD property with $\Time(L, L_b, \mu) = \OM(n + \sqrt{n(L+L_b)/\mu})$.
By applying Theorem \ref{thm_AdaptReg} and \ref{thm_JointAdaptRegSmooth}, we have the following corollaries.
\begin{corollary}
	If each $f_i(\cdot)$ is $L$-smooth and $L_B$-block-smooth and $\PB(\cdot)$ is general convex in Problem \ref{eqn: problem}, then by applying AdaptReg on ADSG with a starting vector $\xB_0$, we obtain an output $\xB$ satisfying $F^\PB(\xB) - F^\PB(\xB^*) \leq \epsilon$ with computational complexity at most
	\begin{equation*}
	\OM(n\log(\epsilon_0/\epsilon) + \sqrt{n\bar{L}}\|\xB_0 - \xB^*\|/\sqrt{\epsilon}).
	\end{equation*}
\end{corollary}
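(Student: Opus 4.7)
The plan is to feed the HOOD-style guarantee of ADSG into the AdaptReg reduction of Theorem \ref{thm_AdaptReg}. The text preceding the corollary already records that Theorem \ref{thm_strongly_convex} implies ADSG satisfies HOOD with $\Time(L, L_B, \mu) = \OM(n + \sqrt{n\bar{L}/\mu})$, where $\bar{L} = L + L_B$. So the corollary is essentially a plug-and-chug, but the bookkeeping in the telescoped sum must be done carefully.

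First, I would invoke Theorem \ref{thm_AdaptReg} with $\AM = $ ADSG, which produces the cumulative cost
\[\sum_{t=1}^{T-1}\Time\!\left(L, L_B, \frac{\mu_0}{2^t}\right) = \sum_{t=1}^{T-1}\OM\!\left(n + \sqrt{\frac{n\bar{L}\cdot 2^t}{\mu_0}}\right),\]
where $\mu_0 = (\FB(\xB_0) - \FB(\xB_*))/\|\xB_0 - \xB_*\|^2$ and $T = \log_2(\epsilon_0/\epsilon)$.

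Second, I would split the sum into its two pieces. The $\OM(n)$ part telescopes trivially into $\OM(nT) = \OM(n\log(\epsilon_0/\epsilon))$, matching the first term of the target bound. The second part is a geometric series $\sqrt{n\bar{L}/\mu_0}\sum_{t=1}^{T-1}\sqrt{2^t}$ whose sum is dominated by a constant multiple of its largest term $\sqrt{2^T}$. Substituting $2^T = \epsilon_0/\epsilon$ and $\mu_0 = \epsilon_0/\|\xB_0 - \xB_*\|^2$ gives
\[\sqrt{\frac{n\bar{L}}{\mu_0}}\cdot\sqrt{\frac{\epsilon_0}{\epsilon}} = \sqrt{n\bar{L}}\,\frac{\|\xB_0 - \xB_*\|}{\sqrt{\epsilon}},\]
exactly the second term claimed. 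Adding the two contributions yields the corollary.

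The main point requiring care is the extraction of the HOOD time from Theorem \ref{thm_strongly_convex}: the end-to-end rate includes a $\log(1/\epsilon)$ factor, but HOOD only asks for a constant-factor reduction of the gap, so this factor disappears and the per-call cost becomes $\OM((1+\sqrt{\kappa/n})\cdot n) = \OM(n + \sqrt{n\bar{L}/\mu})$. Beyond this conversion, the argument is routine manipulation of a geometric sum together with the definition of $\mu_0$, with no further obstacles.
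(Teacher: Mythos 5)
Your proposal is correct and follows exactly the route the paper intends: it extracts the HOOD time $\Time(L,L_B,\mu)=\OM(n+\sqrt{n(L+L_B)/\mu})$ from Theorem \ref{thm_strongly_convex} (the paper states this identical conversion just before the corollaries) and then plugs it into the AdaptReg bound of Theorem \ref{thm_AdaptReg}, summing the geometric series and substituting $\mu_0$. The paper leaves the telescoping and the $\sum_t 2^{t/2}=\OM(2^{T/2})$ bookkeeping implicit, and your write-up supplies precisely those omitted steps with no gaps.
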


\begin{corollary}
	If each $f_i(\cdot)$ is $G$-Lipschitz continuous and $\PB(\cdot)$ is $\mu$-strongly convex, then by applying AdaptSmooth on ADSG with a starting vector $\xB_0$ , we obtain an output $\xB$ satisfying $\EBB F^\PB(\xB) - F^\PB(\xB^*) \leq \epsilon$ with computational complexity at most
	\begin{equation*}
	\OM(n\log(\epsilon_0/\epsilon) + \sqrt{n}G\max_i \|\aB_i\|/\sqrt{\mu\epsilon}).
	\end{equation*}
\end{corollary}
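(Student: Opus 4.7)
The plan is to view this corollary as a straightforward consequence of Theorem \ref{thm_JointAdaptRegSmooth}(1) (AdaptSmooth) instantiated with ADSG as the black-box algorithm satisfying the HOOD property. So the first step is to extract the Time bound for ADSG from Theorem \ref{thm_strongly_convex}: the latter shows that in the $L$-smooth, $L_B$-block-smooth, $\mu$-strongly-convex regime, ADSG halves the suboptimality in $\mathcal{O}(1+\sqrt{\kappa/n})$ outer epochs, and each epoch costs $\mathcal{O}(n)$ coordinate/sample operations (one full gradient of cost $\mathcal{O}(n)$ plus $m=Bn$ inner iterations of per-iteration cost $\mathcal{O}(1)$ in sample access). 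This yields
\[
\Time(L,L_B,\mu)\;=\;\mathcal{O}\!\left(n+\sqrt{n(L+L_B)/\mu}\right).
\]

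Next I would plug this $\Time$ into the conclusion of Theorem \ref{thm_JointAdaptRegSmooth}(1). Writing $L_t\defi (2^t/\lambda_0)\max_i\|\aB_i\|^2$ and $L_{B,t}\defi (2^t/\lambda_0)\max_{i,j}a_{i,j}^2$ for the smoothness parameters induced by AdaptSmooth at stage $t$, note that $\max_{i,j}a_{i,j}^2\leq\max_i\|\aB_i\|^2$, so $L_{B,t}\leq L_t$ and hence
\[
\Time(L_t,L_{B,t},\mu)\;=\;\mathcal{O}\!\left(n+\sqrt{nL_t/\mu}\right)\;=\;\mathcal{O}\!\left(n+\sqrt{n\,2^t\max_i\|\aB_i\|^2/(\lambda_0\mu)}\right).
\]

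Summing over $t=1,\dots,T-1$ gives a total cost of $\mathcal{O}(nT)$ from the first term and a geometric series $\sum_t 2^{t/2}$ from the second term, which is dominated by its last summand $2^{T/2}$ up to a constant. Then I would substitute $T=\log_2(\epsilon_0/\epsilon)$ and $\lambda_0=\epsilon_0/G^2$ (using $\epsilon_0$ as shorthand for $\FB(\xB_0)-\FB(\xB_*)$), so that $2^{T/2}/\sqrt{\lambda_0}=\sqrt{(\epsilon_0/\epsilon)\cdot G^2/\epsilon_0}=G/\sqrt{\epsilon}$. Collecting the two terms yields
\[
\mathcal{O}\!\left(n\log(\epsilon_0/\epsilon)+\sqrt{n}\,G\max_i\|\aB_i\|/\sqrt{\mu\epsilon}\right),
\]
which is the claimed bound.

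The proof is essentially bookkeeping, so there is no real obstacle; the only point requiring a moment of care is verifying that the $L_B$-term introduced by smoothing never dominates the cost, i.e.\ that $\sqrt{L_{B,t}/\mu}$ is absorbed into $\sqrt{L_t/\mu}$ through $\max_{i,j}a_{i,j}^2\leq\max_i\|\aB_i\|^2$, so that the simpler expression $\max_i\|\aB_i\|$ appears in the final complexity. The logarithmic $nT$ term is genuinely unavoidable and represents the cost of the $T$ full-gradient evaluations, one per AdaptSmooth stage.
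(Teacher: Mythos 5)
Your proposal is correct and follows exactly the route the paper intends: the paper states this corollary as an immediate consequence of instantiating Theorem \ref{thm_JointAdaptRegSmooth} (the AdaptSmooth bound) with $\Time(L,L_B,\mu)=\OM(n+\sqrt{n(L+L_B)/\mu})$ from Theorem \ref{thm_strongly_convex}, and your bookkeeping — absorbing $\max_{i,j}a_{i,j}^2$ into $\max_i\|\aB_i\|^2$, summing the geometric series, and substituting $T$ and $\lambda_0$ — is precisely the omitted calculation.
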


\begin{corollary}
	If each $f_i(\cdot)$ is $G$-Lipschitz continuous and $\PB(\cdot)$ is general convex, then by applying JointAdaptRegSmooth on ADSG with a starting vector $\xB_0$ , we obtain an output $\xB$ satisfying $\EBB F^\PB(\xB) - F^\PB(\xB^*) \leq \epsilon$ with computational complexity at most
	\begin{equation*}
	\OM(n\log(\epsilon_0/\epsilon) + \sqrt{n}G\max_i \|\aB_i\|\|\xB_0 - \xB^*\|/\epsilon).
	\end{equation*}
\end{corollary}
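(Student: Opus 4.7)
The plan is to chain Theorem \ref{thm_strongly_convex}, which certifies that ADSG satisfies HOOD with $\Time(L,L_B,\mu)=\OM(n+\sqrt{n(L+L_B)/\mu})$, with part (2) of Theorem \ref{thm_JointAdaptRegSmooth}, which handles the $G$-Lipschitz/general-convex regime via the JointAdaptRegSmooth reduction. Since both hypotheses of the corollary match exactly the requirements of the reduction, all that is left is to plug the HOOD time of ADSG into the meta-algorithm's telescoping cost and simplify.

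First I would instantiate the per-stage parameters prescribed by JointAdaptRegSmooth: $L_t=\frac{2^t}{\lambda_0}\max_i\|\aB_i\|^2$, $L_{B,t}=\frac{2^t}{\lambda_0}\max_{i,j}a_{i,j}^2$, and $\mu_t=\mu_0/2^t$, with $\lambda_0=\epsilon_0/G^2$, $\mu_0=\epsilon_0/\|\xB_0-\xB^*\|^2$, and $T=\log_2(\epsilon_0/\epsilon)$. Using the elementary inequality $\max_{i,j}a_{i,j}^2\leq\max_i\|\aB_i\|^2$ gives $L_{B,t}\leq L_t$, so
\begin{equation*}
\Time(L_t,L_{B,t},\mu_t)\leq\OM\!\left(n+\sqrt{n\cdot 2L_t/\mu_t}\right)=\OM\!\left(n+2^t\sqrt{n}\max_i\|\aB_i\|/\sqrt{\lambda_0\mu_0}\right).
\end{equation*}

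Next I would sum over $t=1,\dots,T-1$. The $n$ term contributes $\OM(nT)=\OM(n\log(\epsilon_0/\epsilon))$, and the second term is a geometric series controlled by $\sum_{t=1}^{T-1}2^t=\OM(2^T)=\OM(\epsilon_0/\epsilon)$. Substituting $\sqrt{\lambda_0\mu_0}=\epsilon_0/(G\|\xB_0-\xB^*\|)$ collapses the $\epsilon_0$'s and yields $\OM(\sqrt{n}\,G\max_i\|\aB_i\|\,\|\xB_0-\xB^*\|/\epsilon)$, which combined with the logarithmic term is exactly the stated bound.

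The main obstacle is purely bookkeeping: verifying the telescoping sum collapses cleanly and that the $L_B$ contribution is genuinely absorbed into $L$ without inflating constants, which is ensured by $\max_{i,j}a_{i,j}^2\leq\max_i\|\aB_i\|^2$. One subtlety worth double-checking is that the reduction's requirement on the base algorithm (HOOD with respect to both $L$ and $L_B$) is met by Theorem \ref{thm_strongly_convex}; since the theorem's bound depends on $L+L_B$ and $\mu$ in exactly the form Definition 4 expects, no modification of ADSG is needed, and the corollary follows by direct substitution.
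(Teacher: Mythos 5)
Your proposal is correct and follows exactly the route the paper takes (the paper states the corollary as an immediate consequence of plugging the HOOD time $\Time(L,L_B,\mu)=\OM(n+\sqrt{n(L+L_B)/\mu})$ from Theorem \ref{thm_strongly_convex} into part (2) of Theorem \ref{thm_JointAdaptRegSmooth}, without writing out the summation). Your explicit verification of the geometric sum, the cancellation $\sqrt{\lambda_0\mu_0}=\epsilon_0/(G\|\xB_0-\xB^*\|)$, and the bound $\max_{i,j}a_{i,j}^2\leq\max_i\|\aB_i\|^2$ absorbing $L_B$ into $L$ are all accurate and fill in precisely the bookkeeping the paper omits.
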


\section{Efficient Implementation}
\begin{algorithm}[t]
	\small
	\caption{ADSG II}
	\begin{algorithmic}[1]
		\label{alg: ADSG II}
		\REQUIRE $n, \xB_0, S, b, B, \mu, \{\alpha_{2, s}, \alpha_{3, s}\}_{s=0}^S$
		\STATE $\hat{\zB}_0^0 \!\leftarrow\!\vec{0}, \xi\!= \!\dot{\xB}^0 \!\leftarrow\! \xB_0, k\! \leftarrow\! 0, m \!\leftarrow\! Bn, \alpha_{1,s}\!\! \leftarrow \!\!1 \!-\! \alpha_{2, s}\! - \!\alpha_{3, s}$;
		\FOR{$s \leftarrow 0$ \TO $S$} 
		\STATE $\gamma_s \leftarrow \frac{\alpha_{2,s}}{\alpha_{2,s} + \alpha_{3,s}}, \beta_{-1}^s \leftarrow 1, \beta_0^s \leftarrow \alpha_{1, s},\bar{L}_s \leftarrow \frac{L}{B\alpha_{3, s}} + L_B$; \STATE $\eta_s \leftarrow \frac{1}{\bar{L}_s\alpha_{2,s}B}, \theta_s \leftarrow 1+\frac{\mu}{\bar{L}_sB^2\alpha_{2, s} + (B-1)\mu}$;
		\STATE $\dot{\nabla}^s \leftarrow \nabla f(\dot{\xB}^s)$; \label{eqn: full gradient}
		\STATE $\uB_0^{s} \leftarrow \xi - \gamma_{s} \hat{\zB}_0^{s} - \dot{\xB}^{s}$; \label{eqn: update between epochs II}
		\FOR{$j \leftarrow 1$ \TO $m$}
		\STATE $k \leftarrow sm+j$;
		\STATE sample mini batch $I$ of size $b$ and feature block $l$;
		\STATE $[\dot{\vB}_k]_l \leftarrow [\dot{\nabla}^s]_l + \frac{1}{b}\sum_{i\in I} \big([\nabla f_i(\clrg{\bar{\yB}_k})]_l - [\nabla f_i(\tilde{\xB}^s)]_l\big)$; \label{eqn_alg_gradient}
		\STATE $[\hat{\zB}_j^s]_l\!\!\leftarrow \!\prox_{\eta_s\PB_l}\!([\clrg{\bar{\zB}_k}]_l\! -\! \eta_s[\dot{\vB}_k]_l)\! -\! [\dot{\xB}^s]_l$, $[\hat{\zB}_j^s]_{\backslash l} \!\leftarrow\! [\hat{\zB}_{j-1}^s]_{\backslash l}$;
		\STATE $\uB_j^s \leftarrow \uB_{j-1}^s + \frac{\alpha_{2,s}B - \gamma_s}{\beta_{j-1}^s}(\hat{\zB}_j^s - \hat{\zB}_{j-1}^s)$; $\beta_j^s \leftarrow \alpha_{1,s}\beta_{j-1}^s$; \label{eqn: lazy update}
		\ENDFOR
		\STATE sample $\sigma$ from $\{1, \ldots, m\}$ with probability $\frac{\theta_s^{\sigma-1}}{\sum_{i=0}^{m-1}\theta_s^{i-1}}$; 
		\STATE $\dot{\xB}^{s+1} \leftarrow \clrg{\bar{\xB}_{sm+\sigma}}$;
		\STATE $\hat{\zB}_0^{s+1} \leftarrow \clrg{\bar{\zB}_k} - \dot{\xB}^{s+1}$, $\xi = \clrg{\bar{\xB}_k}$; \label{eqn: update between epochs}
		\ENDFOR
	\end{algorithmic}
\end{algorithm}
While ADSG has an accelerated convergence rate, naively implementing Algorithm \ref{alg: ADSG I} requires $\OM(d)$ computation in each inner loop due to the two \emph{coupling steps}, which compromises the low per-iteration complexity enjoyed by RBCD type methods.
Such quandary strikes all existing accelerated RBCD algorithm \clr{\cite{lin2015accelerated,nesterov2012efficiency,fercoq2015accelerated,lee2013efficient}}.
To bypass this dilemma, we cast Algorithm \ref{alg: ADSG I} in an equivalent but more practical form, ADSG II, with the inner loop complexity reduced to $\OM(\Omega)$.
ADSG II uses three auxiliary functions $\{\bar{\xB}_k, \bar{\yB}_k, \bar{\zB}_k\}$, marked with green in Algorithm \ref{alg: ADSG II} and defined here as
{\small
	\[\begin{aligned}
	\bar{\yB}_k =&~ \beta_{j-1}\uB_{j-1}^s + \gamma_s\hat{\zB}_{j-1}^s  + \dot{\xB}^s, &\mbox{Line \ref{eqn_alg_gradient}},\\
	\bar{\zB}_k =&~ \hat{\zB}_j^s + \dot{\xB}^s,&\mbox{Line \ref{eqn: update between epochs}},\\
	\bar{\xB}_k =&~ \beta_{j-1}\uB_j^s + \gamma_s\hat{\zB}_j^s  + \dot{\xB}^s, &\mbox{Line \ref{eqn: lazy update} and \ref{eqn: update between epochs}},
\end{aligned}\]}
with $k = sm+j$.
The following proposition shows the equivalence between Algorithm \ref{alg: ADSG I} and \ref{alg: ADSG II}.
\begin{proposition}
	If Algorithm \ref{alg: ADSG I} has the same input as Algorithm \ref{alg: ADSG II}, its iterates $\xB_k$, $\yB_k$, and $\zB_k$ equal to $\bar{\xB}_k$, $\bar{\yB}_k$, $\bar{\zB}_k$ respectively for all $k$.
\end{proposition}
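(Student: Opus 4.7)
I would prove the equivalence by induction on the global index $k = sm+j$, maintaining four invariants: $\dot{\xB}^s = \tilde{\xB}^s$, $\bar{\xB}_k = \xB_k$, $\bar{\yB}_k = \yB_k$, and $\bar{\zB}_k = \zB_k$, where the barred quantities are the implicit vectors defined just after Algorithm~\ref{alg: ADSG II}. The key algebraic observation to exploit is that, by combining lines~\ref{eqn: coupling I} and~\ref{eqn: coupling II} of Algorithm~\ref{alg: ADSG I} and using $\alpha_{1,s}+\alpha_{2,s}+\alpha_{3,s}=1$, the iterate $\xB_k$ satisfies the offset recursion
\[
\xB_k - \tilde{\xB}^s = \alpha_{1,s}\bigl(\xB_{k-1}-\tilde{\xB}^s\bigr) + \alpha_{2,s}B\,\hat{\zB}_j^s - \alpha_{2,s}(B-1)\,\hat{\zB}_{j-1}^s,
\]
where $\hat{\zB}_j^s := \zB_k - \tilde{\xB}^s$. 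The target representation $\bar{\xB}_k - \dot{\xB}^s = \beta_{j-1}^s \uB_j^s + \gamma_s \hat{\zB}_j^s$ is engineered precisely so that the lazy update of $\uB_j^s$ in line~\ref{eqn: lazy update} makes $\bar{\xB}_k - \dot{\xB}^s$ satisfy this same recursion.

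\textbf{Base case and inductive step inside an epoch.} For the base case $s=k=j=0$, the initializations give $\hat{\zB}_0^0 = \vec{0}$, $\dot{\xB}^0 = \xB_0$, $\beta_{-1}^0 = 1$, and line~\ref{eqn: update between epochs II} then produces $\uB_0^0 = \vec{0}$, so $\bar{\xB}_0 = \xB_0 = \zB_0 = \bar{\zB}_0$. Now assume all four invariants hold at $k-1$. Substep (a): using $\bar{\xB}_{k-1} - \dot{\xB}^s = \beta_{j-2}^s \uB_{j-1}^s + \gamma_s \hat{\zB}_{j-1}^s$ together with $\beta_{j-1}^s = \alpha_{1,s}\beta_{j-2}^s$, a direct algebraic rearrangement yields $\bar{\yB}_k = \alpha_{1,s}\bar{\xB}_{k-1} + \alpha_{2,s}\bar{\zB}_{k-1} + \alpha_{3,s}\dot{\xB}^s$, which by the inductive hypothesis equals $\yB_k$. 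Substep (b): because $\bar{\yB}_k = \yB_k$ and $\dot{\xB}^s = \tilde{\xB}^s$, the variance-reduced gradient in line~\ref{eqn_alg_gradient} of Algorithm~\ref{alg: ADSG II} coincides with $\vB_k$ in line~\ref{eqn: VR gradient} of Algorithm~\ref{alg: ADSG I}; since $[\bar{\zB}_{k-1}]_l = [\zB_{k-1}]_l$ supplies the correct input to the prox, the prox update of Algorithm~\ref{alg: ADSG II} reproduces line~\ref{eqn: z update} exactly, giving $\bar{\zB}_k = \zB_k$. Substep (c): substituting the update $\uB_j^s - \uB_{j-1}^s = \tfrac{\alpha_{2,s}B-\gamma_s}{\beta_{j-1}^s}(\hat{\zB}_j^s - \hat{\zB}_{j-1}^s)$ into $\bar{\xB}_k - \dot{\xB}^s$ and comparing coefficients of $\hat{\zB}_{j-1}^s$ with the offset recursion above reduces the match to the single identity $\gamma_s(1 - \alpha_{1,s}) = \alpha_{2,s}$, which is exactly the definition $\gamma_s = \alpha_{2,s}/(\alpha_{2,s}+\alpha_{3,s})$ set in line~3 of Algorithm~\ref{alg: ADSG II}.

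\textbf{Epoch transition and main obstacle.} The principal technical subtlety, which I expect to be the hardest bookkeeping in the proof, is the boundary between epochs $s$ and $s+1$: the reference point changes from $\dot{\xB}^s$ to $\dot{\xB}^{s+1}$, and the $\uB$--$\hat{\zB}$ representation must be re-based so that the affine invariant continues to hold. Lines~\ref{eqn: update between epochs} and~\ref{eqn: update between epochs II} store $\xi \leftarrow \bar{\xB}_{(s+1)m}$, $\hat{\zB}_0^{s+1} \leftarrow \bar{\zB}_{(s+1)m} - \dot{\xB}^{s+1}$, and $\uB_0^{s+1} = \xi - \gamma_{s+1}\hat{\zB}_0^{s+1} - \dot{\xB}^{s+1}$. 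Plugging these into the new-epoch formula gives $\beta_{-1}^{s+1}\uB_0^{s+1} + \gamma_{s+1}\hat{\zB}_0^{s+1} + \dot{\xB}^{s+1} = \xi = \bar{\xB}_{(s+1)m}$, so the value of $\bar{\xB}$ is preserved across the re-parametrisation. The corresponding equality for $\bar{\zB}_{(s+1)m}$ is immediate from its definition, and $\dot{\xB}^{s+1} = \tilde{\xB}^{s+1}$ then follows from line~15 combined with the already-proved invariant $\bar{\xB}_{sm+\sigma} = \xB_{sm+\sigma}$ at the sampled index $\sigma$. Once this re-initialisation check and the coefficient-matching identity $\gamma_s(1-\alpha_{1,s}) = \alpha_{2,s}$ are in hand, the remainder of the argument is routine linear algebra.
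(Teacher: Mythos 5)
Your proposal is correct and follows essentially the same route as the paper's proof: induction on the iterates within each epoch (establishing $\bar{\yB}_k=\yB_k$, then $\bar{\zB}_k=\zB_k$, then $\bar{\xB}_k=\xB_k$ via the two identities $\beta_{j-1}^s=\alpha_{1,s}\beta_{j-2}^s$ and $\gamma_s(1-\alpha_{1,s})=\alpha_{2,s}$), combined with a separate check that lines \ref{eqn: update between epochs II} and \ref{eqn: update between epochs} re-base the $\uB$--$\hat{\zB}$ representation correctly at each epoch boundary. Your "offset recursion" framing is only a cosmetic repackaging of the same coefficient bookkeeping the paper carries out by direct expansion.
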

\begin{proof}
	First, we prove that if at the beginning of the $s^{th}$ epoch, $\bar{\zB}_{sm} = \zB_{sm}$, $\bar{\xB}_{sm} = \xB_{sm}$, and $\dot{\xB}^s = \tilde{\xB}^s$ stand, then the proposition stand in the following iterations in that epoch.
	We prove with induction.
	Assume that the equivalence holds till the $(k-1)^{th}$ iteration.
	In the $k^{th}$ iteration, for $\bar{\yB}_k$ we have
		{\[	\begin{aligned}
			\yB_k &= \alpha_{1,s}\xB_{k-1} +\alpha_{2,s}{\zB}_{k-1} + \alpha_{3,s}\tilde{\xB}^s\\
			&= \alpha_{1,s}\bar{\xB}_{k-1} +\alpha_{2,s}\bar{\zB}_{k-1} + \alpha_{3,s}\dot{\xB}^s\\
			&= \alpha_{1,s}(\beta_{j-2}^s\uB_{j-1}^s + \gamma_s\hat{\zB}_{j-1}^s) + \alpha_{2,s}\hat{\zB}_{j-1}^s + \dot{\xB}^s \\
			&= \beta_{j-1}^s\uB_{j-1}^s + \gamma_s\hat{\zB}_{j-1}^s + \dot{\xB}^s = \bar{\yB}_k.
			\end{aligned}\]}
	since $\alpha_{1,s}\beta_{j-2}^s = \beta_{j-1}^s$ and $\alpha_{1,s}\gamma_s + \alpha_{2,s} = \gamma_s$.
	For $\bar{\zB}_k$, by induction we have $[\bar{\zB}_k]_{\backslash l} = [\bar{\zB}_{k-1}]_{\backslash l} = [\zB_{k -1}]_{\backslash l} = [\zB_{k}]_{\backslash l}$.
	Additionally, since $\dot{\nabla}^s = \tilde{\nabla}^s$ and $\yB_k = \bar{\yB}_k$, we have $[\vB_k]_l = [\dot{\vB}_k]_l$	and thus
	$[\bar{\zB}_k]_l = [\hat{\zB}_j^s + \dot{\xB}^s]_l =\!\prox_{\eta\PB_l}([\bar{\zB}_{k - 1}\! - \!\eta\dot{\vB}_k]_l) \! =\! \prox_{\eta\PB_l}([\zB_{k -1} \!-\! \eta\vB_k]_l)  = [\zB_k]_l$.
	For $\bar{\xB}_k$, we have
		{\[\begin{aligned}
			\xB_{k} &= \bar{\yB}_{k} + \alpha_{2,s}B(\hat{\zB}_{j}^s - \hat{\zB}_{j-1}^s) \\
			&=\beta_{j-1}^s\uB_{j-1}^s+\gamma_s\hat{\zB}_{j-1}^s + \alpha_{2,s}B(\hat{\zB}_{j}^s - \hat{\zB}_{j-1}^s) + \dot{\xB}^s\\
			&= \beta_{j-1}^s\uB_{j}^s + \gamma_s \hat{\zB}_{j}^s + \dot{\xB}^s = \bar{\xB}_{k}
			\end{aligned}\]} 
	by the updating rule of $\uB_k$ in line \ref{eqn: lazy update} in ADSG II.
	
	We then show that at the beginning of each epoch $\bar{\zB}_{sm} = \zB_{sm}$, $\bar{\xB}_{sm} = \xB_{sm}$, and $\dot{\xB}^s = \tilde{\xB}^s$ stand.
	For $\zB_0$, we have $\bar{\zB}_0 = \zB_0$ from the initialization.
	For $\zB_{sm}, s \geq 1$, we have $\zB_{sm} = \bar{\zB}_{sm}  = \hat{\zB}_0^{s} + \dot{\xB}^{s} = \bar{\zB}_{sm}$, where the first equation is from the induction in previous epoch, and the second equation is from the definition of $\hat{\zB}_0^{s}$ in line \ref{eqn: update between epochs} in ADSG II.
	For $\xB_0$, we clearly have $\xB_0 = \bar{\xB}_0$ by the initialization.
	For $\xB_{sm}, s \geq 1$, we have {\small $\xB_{sm} = \xi = \beta_{-1}^{s}\uB_0^{s} + \gamma_{s}\hat{\zB}_0^{s} + \dot{\xB}^{s} = \bar{\xB}_{sm},$}
	where the first equation is from the induction in previous epoch, and the second equation is from line \ref{eqn: update between epochs II} in ADSG II.
	$\dot{\xB}^s = \tilde{\xB}^s$ because $\xB_k = \bar{\xB}_k$ for all $k$ in that epoch.	
	Thus we have the result.
\end{proof}

\subsection{Avoiding Numerical Issue}
\begin{algorithm}[t]
	\caption{ADSG III}
	\begin{algorithmic}[1]
		\label{alg: ADSG III}
		\REQUIRE $m, \xB_0, \alpha_{1,0}, \alpha_{2, 0}$
		\STATE $\uB_0^0 = \hat{\zB}_0^0 \leftarrow 0, \dot{\xB}^0 \leftarrow \xB_0, k = 0$;
		\FOR{$s \leftarrow 0$ \TO $S$}
		\STATE $\bar{L}_s = \frac{L_Q}{B\alpha_{3, s}} + L_B, \eta_s = \frac{1}{\bar{L}_s\alpha_{2,s}B}$;
		\STATE $\dot{\mu}^s = \nabla f(\dot{\xB}^s)$;
		\STATE $\gamma_s = \frac{\alpha_{2,s}}{\alpha_{2,s} + \alpha_{3,s}}$;
		\STATE $\hat{\zB}_0^s = \bar{\zB}_k - \dot{\xB}^s$;		
		\STATE $\xi_0^s = \bar{\xB}_k - \gamma_s \hat{\zB}_0^s - \dot{\xB}^s, \omega = 1^B$; 
		\FOR{$j \leftarrow 1$ \TO $m$}
		\STATE $k = (sm)+j$;

		\STATE sample $i$ from $\{1, \ldots, n\}$ and $l$ from $\{1, \ldots, B\}$;
		\STATE $\tilde{\nabla}_k = \dot{\mu}^s + \nabla f_i(\bar{\yB}_k) - \nabla f_i(\dot{\xB}^s)$; \label{eqn: partial gradient}
		\STATE $[\hat{\zB}_j^s]_l = \prox_{\eta\PB_l}([\bar{\zB}_k - \eta\tilde{\nabla}_k]_l) - [\dot{\xB}^s]_l, [\hat{\zB}_j^s]_{\backslash l} = [\hat{\zB}_{j-1}^s]_{\backslash l}$;
		\STATE $[\xi_j^s]_l = \alpha_{1,s}^{\omega_l}[\xi_{j-1}^s]_l + (\alpha_{2,s}B - \gamma_s)[\hat{\zB}_j^s - \hat{\zB}_{j-1}^s]_l, [\xi_j^s]_{\backslash l} = [\xi_{j-1}^s]_{\backslash l}$;
		\STATE $\omega_l = 0$, $\omega_i = \omega_i + 1, i \neq l$;
		\ENDFOR
		\STATE Sample $\bar{\sigma}_s$ from $\{1, \ldots, m\}$ uniformly;
		\STATE $\dot{\xB}^{s+1} = \xi^s_{\bar{\sigma}_s}+\gamma_s\hat{\zB}_{\bar{\sigma}_s}^s + \dot{\xB}^s$;
		\ENDFOR
	\end{algorithmic}
\end{algorithm}
Since $\beta_j^s$ decreases exponentially (line \ref{eqn: lazy update} in Algorithm \ref{alg: ADSG II}), the computation of $\uB_j^s$ involving the inversion of $\beta_j^s$ can be numerically unstable.
To overcome this issue, we can simply keep their product $\beta^s_{j-1}\uB_j^s = \Xi_j^s\in \RBB^d$ rather than themselves separately to make the computation numerically tractable.
Consequently, the functions $\bar{\yB}_k$ and $\bar{\xB}_k$ are transformed into
\begin{align}
	\bar{\yB}_k =&~ \alpha_{1, s}\Xi_{j-1}^s + \gamma_s\hat{\zB}_{j-1}^s  + \dot{\xB}^s,\\
	\bar{\xB}_k =&~ \Xi_j^s + \gamma_s\hat{\zB}_j^s  + \dot{\xB}^s.
\end{align}
Since exactly computing $\Xi_j^s$ involves full vector operations, we maintain two vectors $\xi_j^s \in \RBB^d$ and $\omega_j^s \in \RBB^B$ instead so that the following lazy update strategy can be utilized.

At the beginning of each epoch, we initialize a count vector $\omega_0^s \in \RBB^B$ to be a zero vector and set $\xi_0^s = \uB_0^s$.
In the $j^{th}$ iteration, suppose $l$ is the block being selected. We do the follow steps
\begin{enumerate}
	\item $[\xi_j^s]_l = \alpha_{1,s}^{[\omega_{j-1}^s]_l+1}[\xi_{j-1}^s]_l + (\alpha_{2,s}B - \gamma_s)[\hat{\zB}_j^s - \hat{\zB}_{j-1}^s]_l$, $[\xi_j^s]_{\backslash l} = [\xi_{j-1}^s]_{\backslash l}$, 
	\item $[\omega_j^s]_l = 0$, $[\omega_j^s]_{\backslash l} = [\omega_{j-1}^s]_{\backslash l} + 1$.
\end{enumerate}
\begin{proposition}
	Maintaining $\omega_j^s$ and $\xi_j^s$ as above, then we have $[\Xi_j^s]_l = \alpha_{1,s}^{[\omega_j^s]_l}[\xi_j^s]_l, \forall l \in [B]$.
	\label{proposition_numerical_issue}
\end{proposition}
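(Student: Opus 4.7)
The plan is a straightforward induction on the inner iteration counter $j$, driven by first rewriting the update rule of $\uB_j^s$ as an update rule for $\Xi_j^s$ and then exploiting the coordinate-sparse structure of the increment $\hat{\zB}_j^s - \hat{\zB}_{j-1}^s$.

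First I would translate the recursion in line \ref{eqn: lazy update} of Algorithm \ref{alg: ADSG II} into one purely in terms of $\Xi_j^s = \beta_{j-1}^s \uB_j^s$. Multiplying the $\uB_j^s$ update by $\beta_{j-1}^s$ and using $\beta_{j-1}^s = \alpha_{1,s}\beta_{j-2}^s$ (with the convention $\beta_{-1}^s = 1$ coming from Algorithm \ref{alg: ADSG II}) gives
\begin{equation*}
\Xi_j^s = \alpha_{1,s}\,\Xi_{j-1}^s + (\alpha_{2,s}B - \gamma_s)\bigl(\hat{\zB}_j^s - \hat{\zB}_{j-1}^s\bigr).
\end{equation*}
Crucially, $\hat{\zB}_j^s - \hat{\zB}_{j-1}^s$ is supported only on the block $l$ selected at the $j$-th inner iteration, since the proximal step in Algorithm \ref{alg: ADSG II} only updates the $l$-th block of $\hat{\zB}^s$. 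Consequently the blockwise recursion splits cleanly: for the non-selected blocks $l' \neq l$ we have $[\Xi_j^s]_{l'} = \alpha_{1,s}[\Xi_{j-1}^s]_{l'}$, while for the selected block we have $[\Xi_j^s]_l = \alpha_{1,s}[\Xi_{j-1}^s]_l + (\alpha_{2,s}B - \gamma_s)[\hat{\zB}_j^s - \hat{\zB}_{j-1}^s]_l$.

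Next I would set up the induction. The base case uses the initialization $\omega_0^s = \mathbf{0}$ and $\xi_0^s = \uB_0^s = \Xi_0^s$, so $[\Xi_0^s]_l = \alpha_{1,s}^{0}[\xi_0^s]_l$ holds trivially for every $l$. For the inductive step, assume the identity holds at iteration $j-1$ and let $l$ be the block drawn at iteration $j$. For any $l' \neq l$, the lazy update leaves $[\xi_j^s]_{l'} = [\xi_{j-1}^s]_{l'}$ and increments $[\omega_j^s]_{l'} = [\omega_{j-1}^s]_{l'} + 1$, and combining this with $[\Xi_j^s]_{l'} = \alpha_{1,s}[\Xi_{j-1}^s]_{l'}$ together with the inductive hypothesis yields
\begin{equation*}
[\Xi_j^s]_{l'} = \alpha_{1,s}^{[\omega_{j-1}^s]_{l'}+1}[\xi_{j-1}^s]_{l'} = \alpha_{1,s}^{[\omega_j^s]_{l'}}[\xi_j^s]_{l'}.
\end{equation*}
For the selected block, by definition $[\xi_j^s]_l = \alpha_{1,s}^{[\omega_{j-1}^s]_l+1}[\xi_{j-1}^s]_l + (\alpha_{2,s}B - \gamma_s)[\hat{\zB}_j^s - \hat{\zB}_{j-1}^s]_l$, which by the inductive hypothesis equals exactly $\alpha_{1,s}[\Xi_{j-1}^s]_l + (\alpha_{2,s}B - \gamma_s)[\hat{\zB}_j^s - \hat{\zB}_{j-1}^s]_l = [\Xi_j^s]_l$. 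Since the counter is reset to $[\omega_j^s]_l = 0$, we get $[\Xi_j^s]_l = \alpha_{1,s}^{0}[\xi_j^s]_l = \alpha_{1,s}^{[\omega_j^s]_l}[\xi_j^s]_l$, completing the induction.

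The argument is essentially bookkeeping, so there is no real obstacle; the only point demanding care is consistency of the index offset in the $\beta$-recursion (in particular the convention $\beta_{-1}^s = 1$), and the observation that the exponent recorded by $\omega$ is exactly the number of iterations since the block was last touched, which is precisely the number of $\alpha_{1,s}$ factors accumulated on the $\Xi$ side.
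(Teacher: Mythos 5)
Your proof is correct and follows essentially the same inductive argument as the paper's: the same base case via $\beta_{-1}^s = 1$ giving $\Xi_0^s = \uB_0^s = \xi_0^s$, the same blockwise split of the recursion $\Xi_j^s = \alpha_{1,s}\Xi_{j-1}^s + (\alpha_{2,s}B-\gamma_s)(\hat{\zB}_j^s - \hat{\zB}_{j-1}^s)$ into the selected and non-selected blocks, and the same use of the counter reset $[\omega_j^s]_l = 0$. The only difference is that you derive the $\Xi$-recursion explicitly from the $\uB$-update by multiplying through by $\beta_{j-1}^s$, a step the paper leaves implicit.
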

\begin{proof}
	We prove via induction.
	By setting $\beta_{-1}^s = 1$ for all $s$, we have $\Xi_0^s = \uB_0^s = \xi_0^s$. 
	Assume the conclusion holds up to the $j^{th}$ iteration. In the ${(j+1)}^{th}$ iteration, let $l$ be the block being sampled. For any $i\neq l$, $[\Xi_{j+1}^s]_i = \alpha_{1,s}[\Xi_j^s]_i = \alpha_{1,s}^{[\omega_j^s]_i+1}[\xi_j^s]_i = \alpha_{1,s}^{[\omega_{j+1}^s]_i}[\xi_{j+1}^s]_i$. Additionally, 
	\begin{align*}
		[\Xi_{j+1}^s]_l =& \alpha_{1,s}[\Xi_j^s]_l + (\alpha_{2,s}B - \gamma_s)[\hat{\zB}_{j+1}^s - \hat{\zB}_j^s]_l \\
		=& \alpha_{1,s}^{[\omega_j^s]_i+1}[\xi_j^s]_l + (\alpha_{2,s}B - \gamma_s)[\hat{\zB}_{j+1}^s - \hat{\zB}_j^s]_l = [\xi_{j+1}^s]_l,
	\end{align*}
	due to the definition of $[\xi_{j+1}^s]_l$ and that $[\omega_{j+1}^s]_l = 0$
\end{proof}
While line \ref{eqn: partial gradient} of Algorithm \ref{alg: ADSG III} computes $[\nabla f_i(\bar{\yB}_k)]_l$, the exact computation of $\Xi_{j-1}^s$ can be avoided in the ERM setting. We have $[\nabla f_i(\bar{\yB}_k)]_l = \nabla \phi_i(\aB_i^\top\bar{\yB}_k)[\aB_i]_l$ and therefore three inner products are involved: $\aB_i^\top\dot{\xB}^s$, $\aB_i^\top\hat{\zB}_j^s$, and $\aB_i^\top \Xi_{j-1}^s$, by recalling that $\aB_i^\top\bar{\yB}_k = \aB_i^\top(\alpha_{1,s}\Xi_{j-1}^s + \gamma_s\hat{\zB}_{j-1}^s  + \dot{\xB}^s)$. We can simply compute $\aB_i^\top \Xi_{j-1}^s$ by $\aB_i^\top \Xi_{j-1}^s = \sum_{l=1}^B \alpha_{1,s}^{[\omega_{j-1}^s]_l} [\aB_i]_l^\top [\xi_{j-1}^s]_l$, which is $\OM(\rho d + B)$.

By using all the lazy update strategies we discussed above, the exact computation of $\Xi_j^s$ only happens at the end of each epoch and we are able to avoid the full vector operation without introducing numerical issue.

\subsection{Overall Computational Complexity}
%
%
We discuss the detailed implementation of ADSG III when solving ERM problems.
For simplicity, the mini batch size $b$ is set to $1$.
In line \ref{eqn_alg_gradient}, $[\nabla f_i(\bar{\yB}_k)]_l = \nabla \phi_i(\aB_i^\top\bar{\yB}_k) [\aB_i]_l$, where we compute each term in $\aB_i^\top\bar{\yB}_k = \alpha_{1, s}\aB_i^\top\Xi_{j-1}^s + \gamma_s\aB_i^\top\hat{\zB}_{j-1}^s  + \aB_i^\top\dot{\xB}^s$ separately.
\begin{enumerate}
	\item In the first term, since we record $[\Xi_j^s]_l = \alpha_{1,s}^{[\omega_j^s]_l}[\xi_j^s]_l, \forall l \in [B]$ according to Proposition \ref{proposition_numerical_issue}, we compute $\psi_l = [\aB_i]_l^\top[\xi_{j-1}^s]_l$ for every $l \in [B]$ and then computes $\aB_i^\top\Xi_{j-1}^s = \sum_{l=1}^B \alpha_{1,s}^{[\omega_{j-1}^s]_l}\psi_l$.
	Therefore, we have $\OM(\rho d + B)$ computation for the first term.
	\item The second term can be done in $\OM(\rho d)$.
	\item We can save the third term when computing the gradient at the snapshot point $\tilde{\xB}^s$, and hence the third term takes $\OM(1)$.
\end{enumerate}
All in all, line \ref{eqn_alg_gradient} takes $\OM(\rho d + B)$.
Line \ref{eqn: lazy update} takes $\OM(\Omega)$ because only the $l^{th}$ block is updated.
Consequently, we have $\OM(\rho d + \Omega + B)$ from every inner loop.
In general $\rho$ is small in practical problems (Table \ref{table: statistics} gives $\rho$, the sparsity, of the used datasets), $\Omega$ dominates the rest two terms as long as $B \leq \sqrt{d}$ and $B \leq 1/\rho$.
For a moderate $B$, the per-epoch complexity of Algorithm \ref{alg: ADSG II} is $\OM(Bn\Omega) = \OM(dn)$.

Combining the above per-epoch complexity analysis and the convergence rate, the overall computational complexity of ADSG is $\OM(d(n + \sqrt{n/\kappa})\log1/\epsilon)$ in strongly convex case, and $\OM(d(n+\sqrt{nLD/\epsilon})\log1/\epsilon)$ in general convex case.

\section{Experiments}
	\begin{table}[b]
	\centering
	\caption{Statistics of datasets.}
	\small
	\begin{tabular}{|c|c|c|c|}
		\hline
		Dataset    &      n       &      d      &  sparsity  \\ \hline
		news20-binary &   $19,996$   & $1,355,191$ & $0.0336\%$ \\ \hline
		kdd2010-raw  & $19,264,097$ & $1,129,522$ &    $0.0008\%$    \\ \hline
		avazu-app   & $14,596,137$ & $999,990$ &    $0.0015\%$    \\ \hline
		url-combined  & $2,396,130$  & $3,231,961$ & $0.0036\%$ \\ \hline
	\end{tabular}
	\label{table: statistics}
	\end{table}
	\begin{figure}[!t]
		\centering
		\begin{tabular}{c|c|c|c}
			\includegraphics[width = .22\columnwidth]{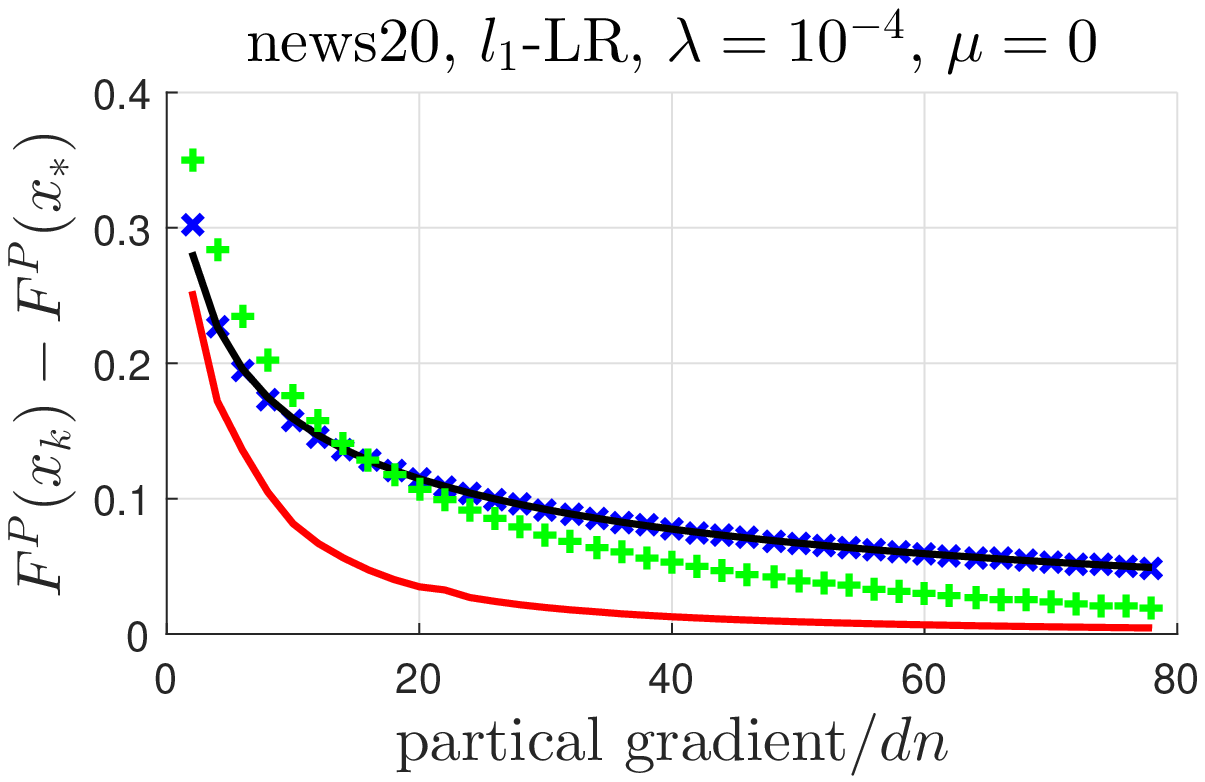}& 
			\includegraphics[width = .22\columnwidth]{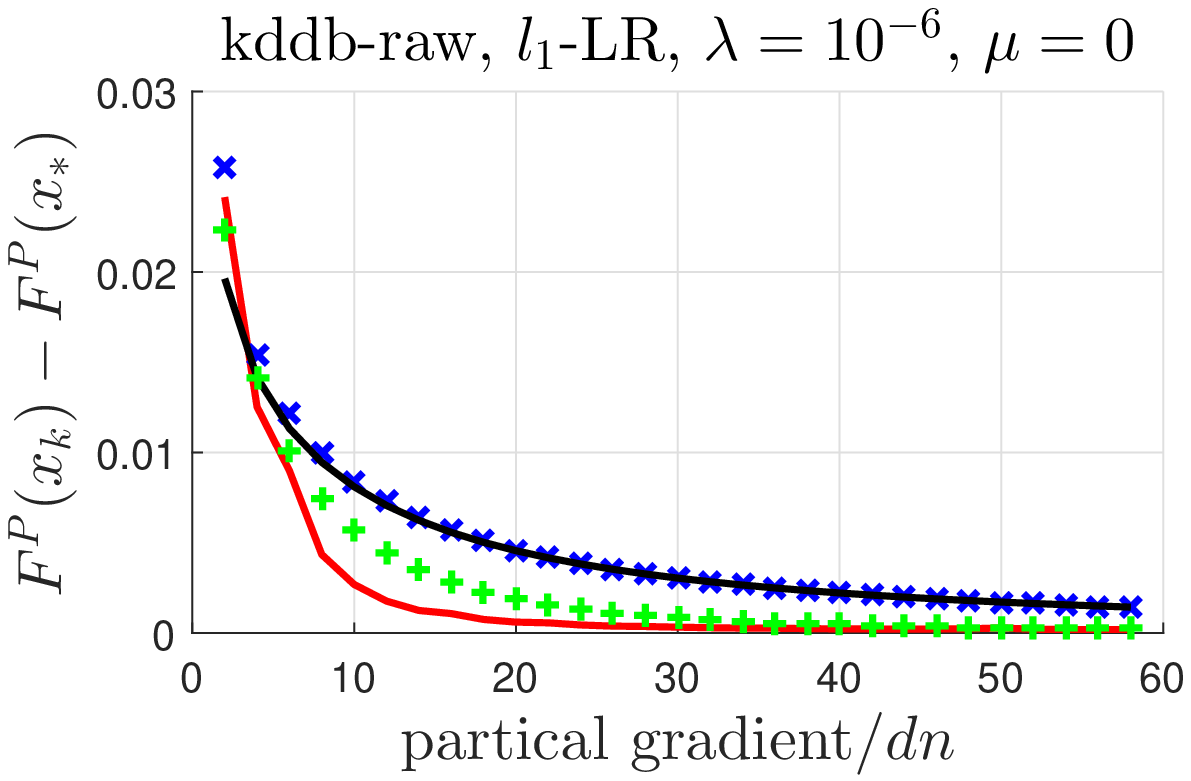}& 
			\includegraphics[width = .22\columnwidth]{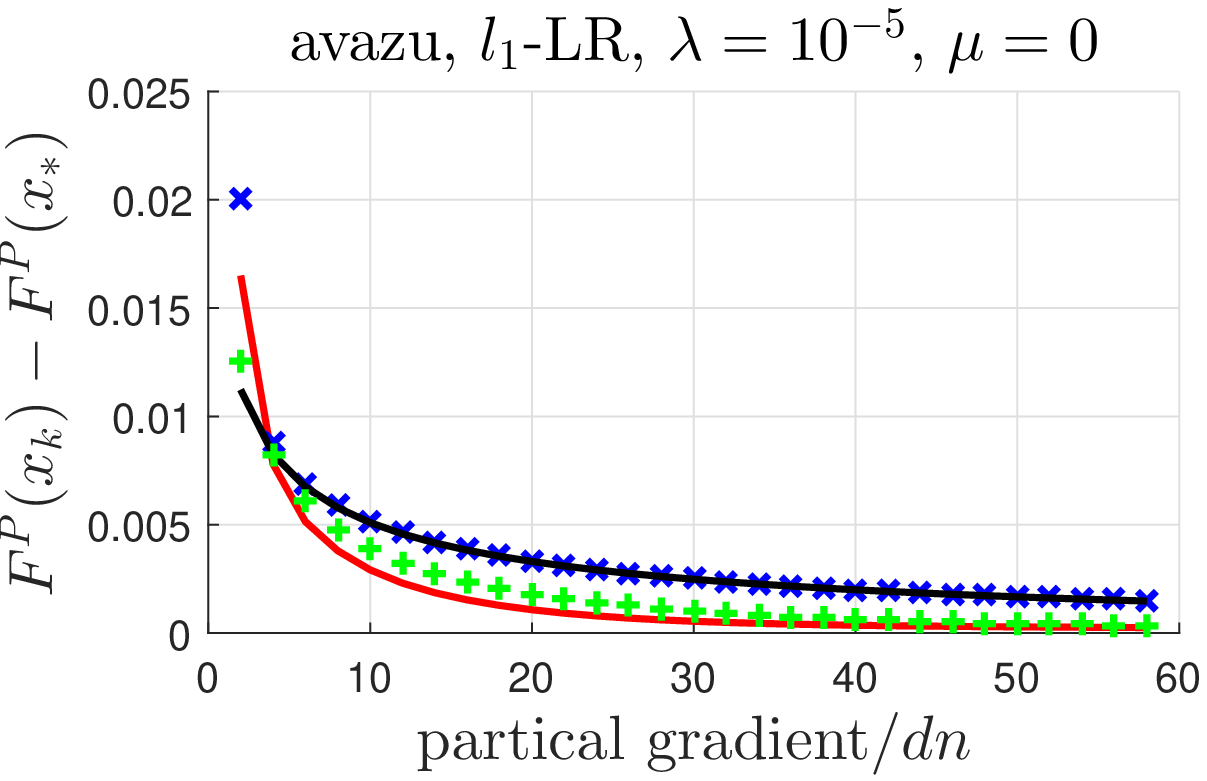}&
			\includegraphics[width = .22\columnwidth]{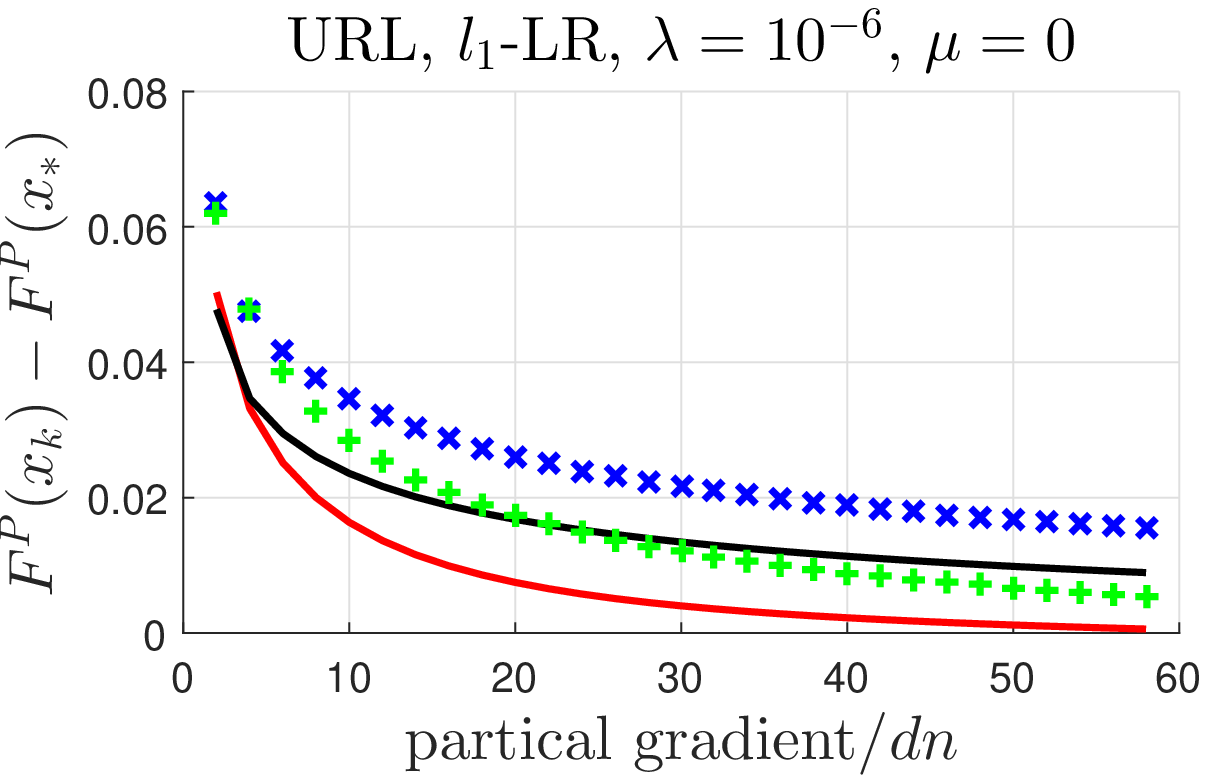}
			\\
			\includegraphics[width = .22\columnwidth]{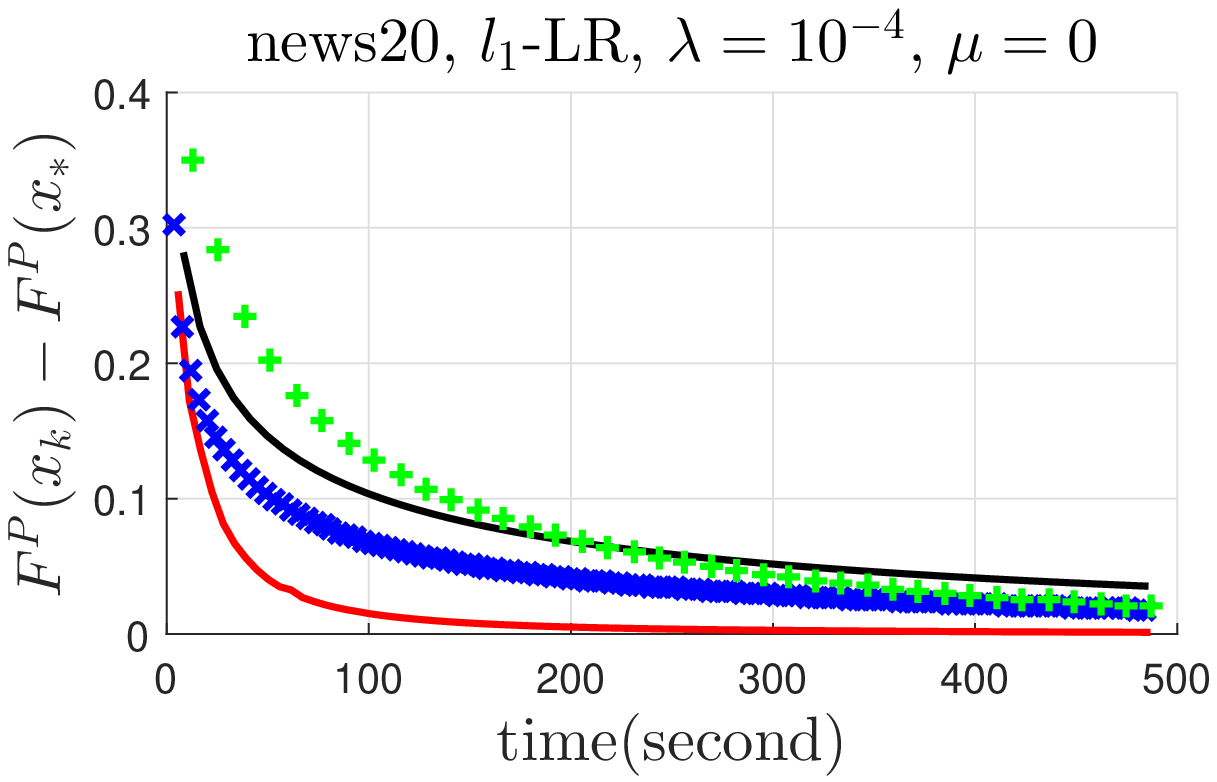}& 
			\includegraphics[width = .22\columnwidth]{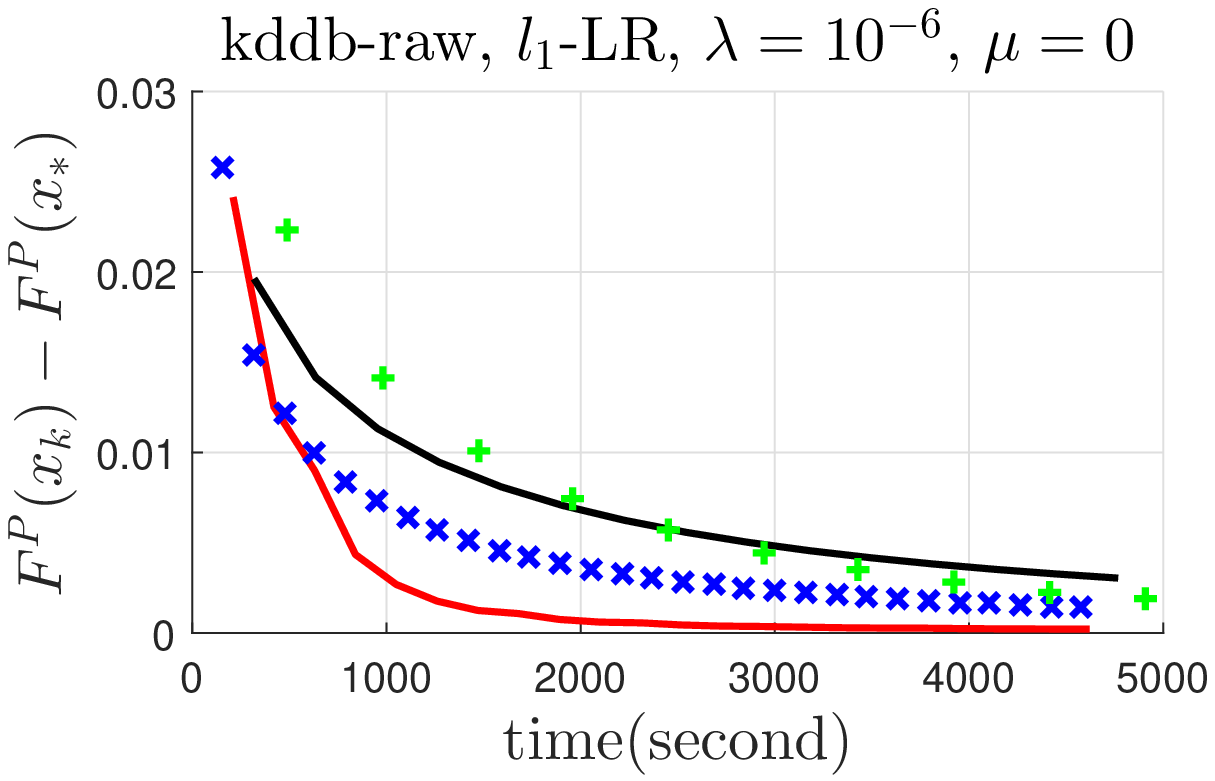}& 
			\includegraphics[width = .22\columnwidth]{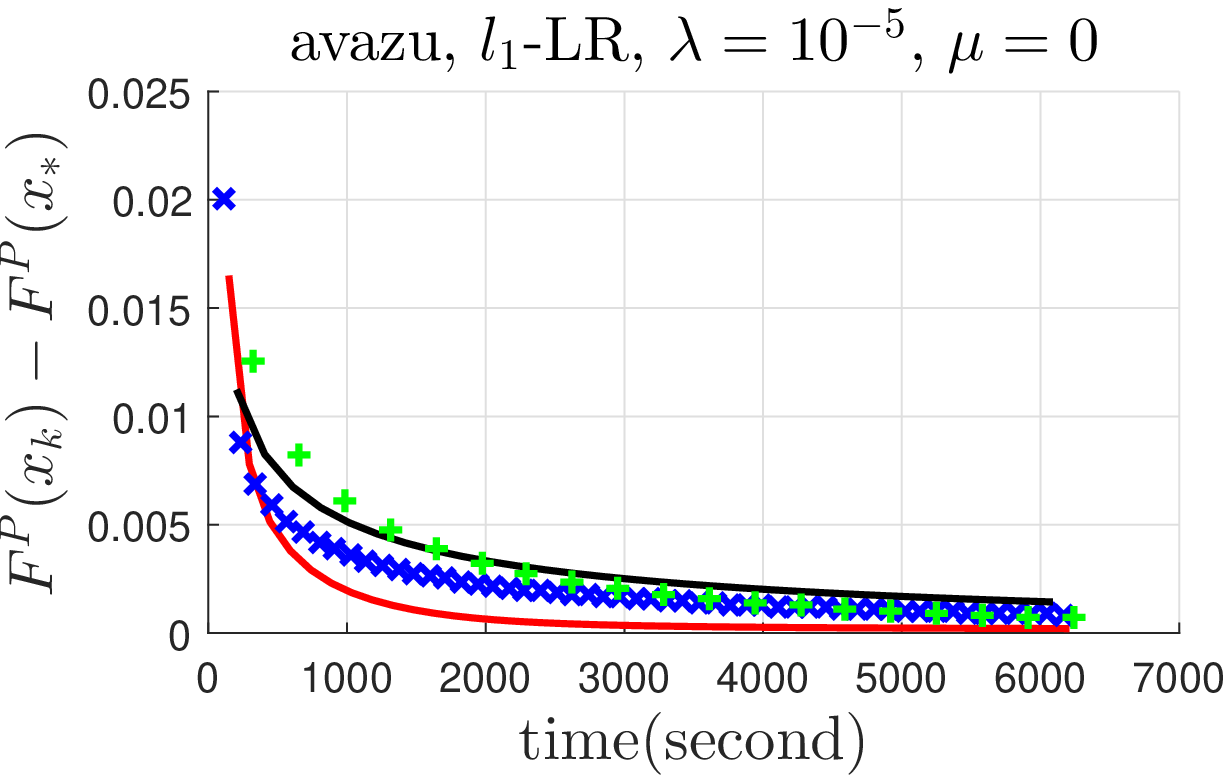}&
			\includegraphics[width = .22\columnwidth]{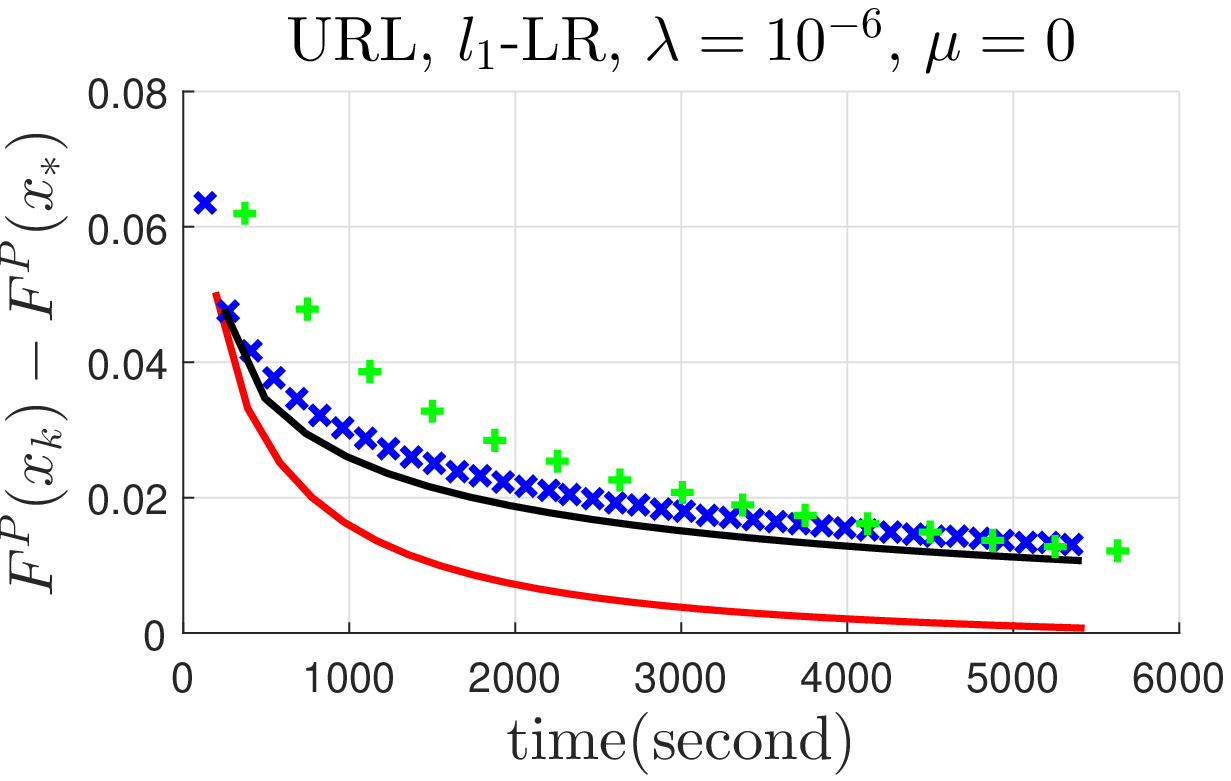}
			\\
			\includegraphics[width = .22\columnwidth]{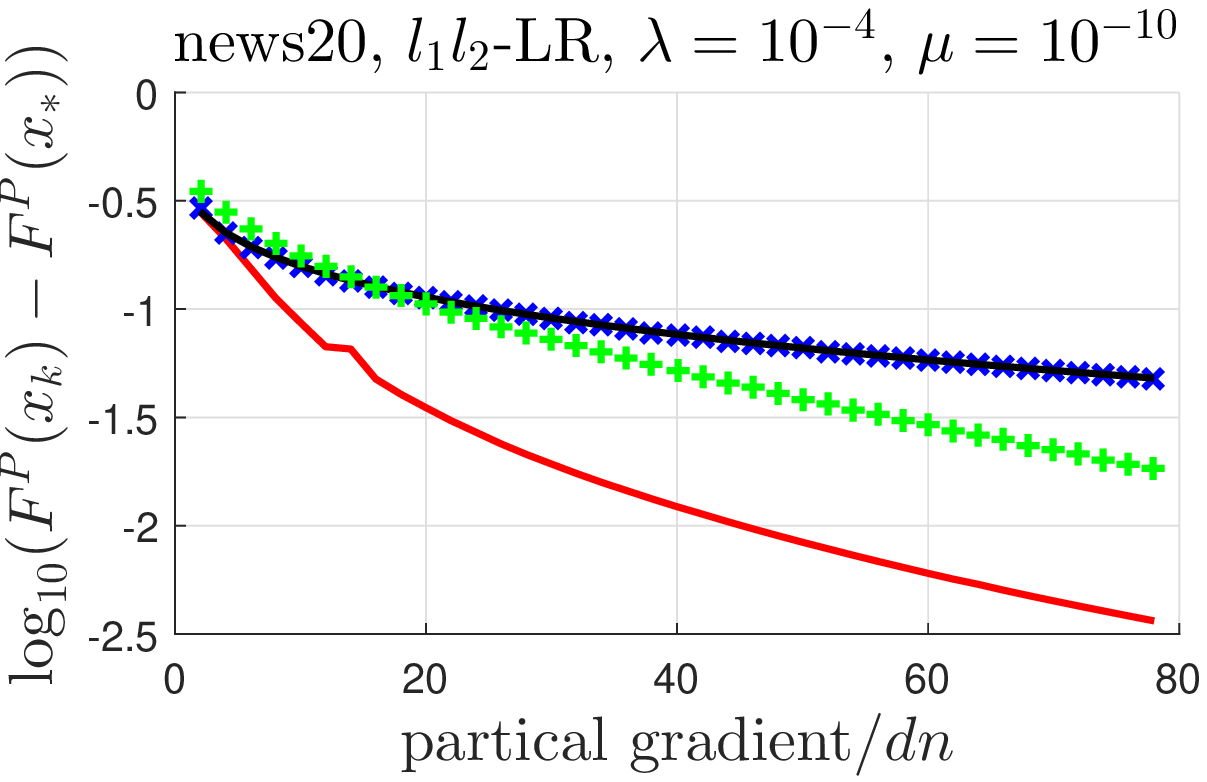}& 
			\includegraphics[width = .22\columnwidth]{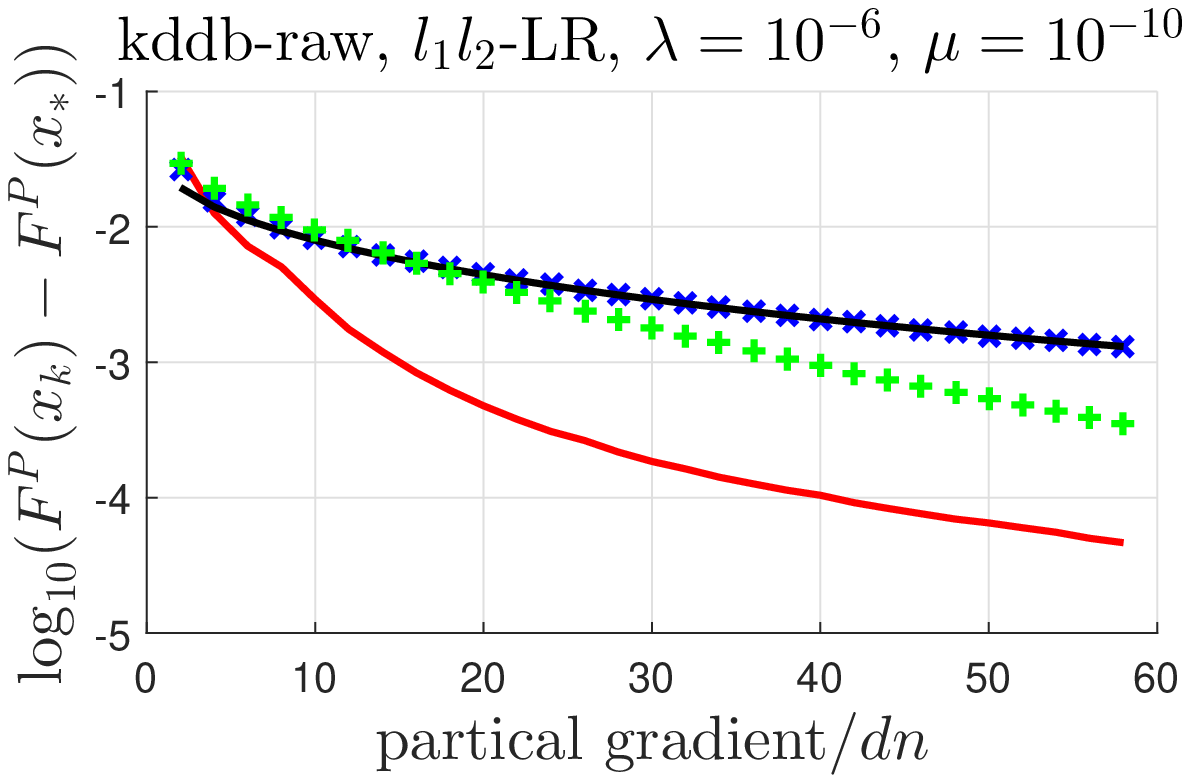}& 
			\includegraphics[width = .22\columnwidth]{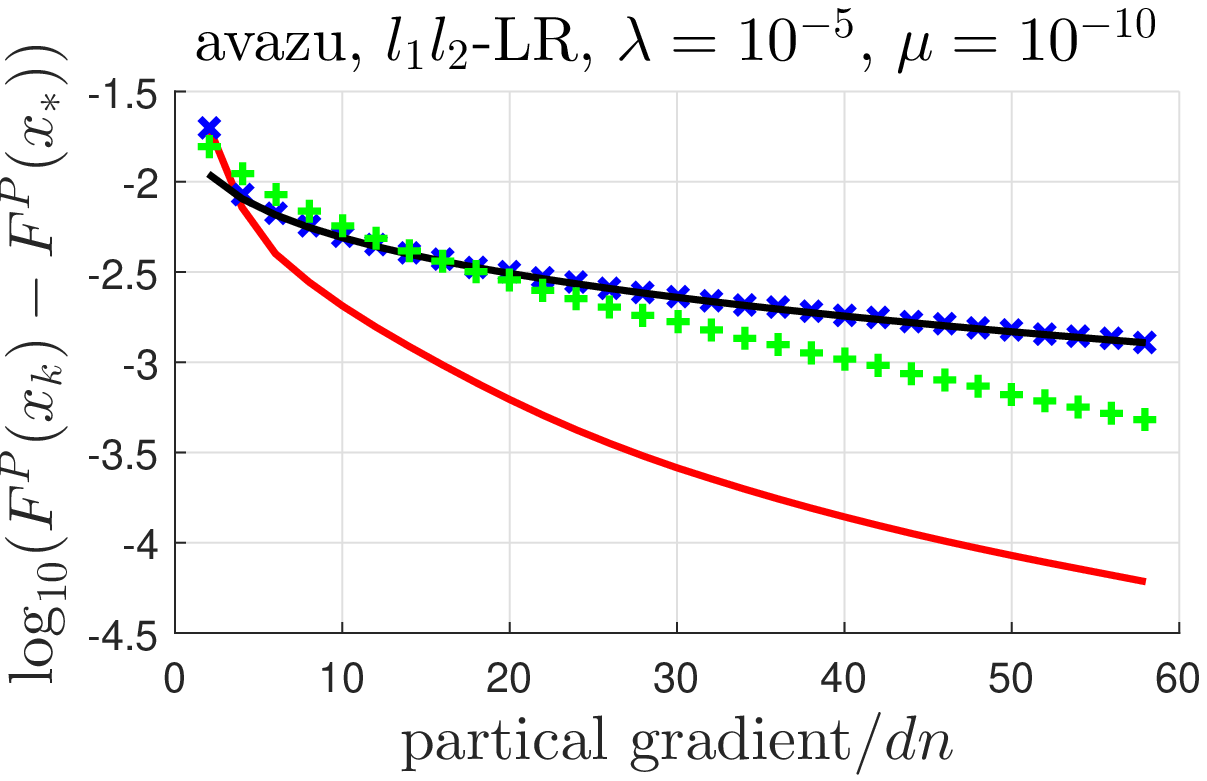}&
			\includegraphics[width = .22\columnwidth]{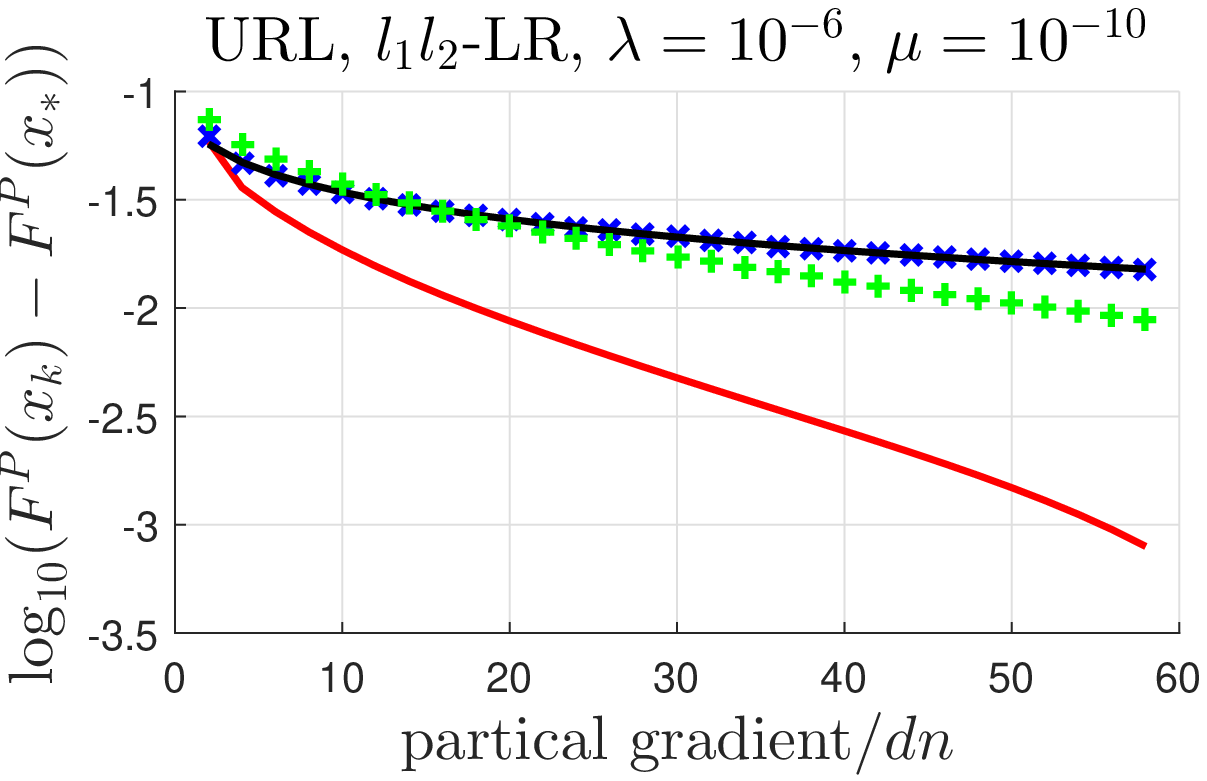}
			\\
			\includegraphics[width = .22\columnwidth]{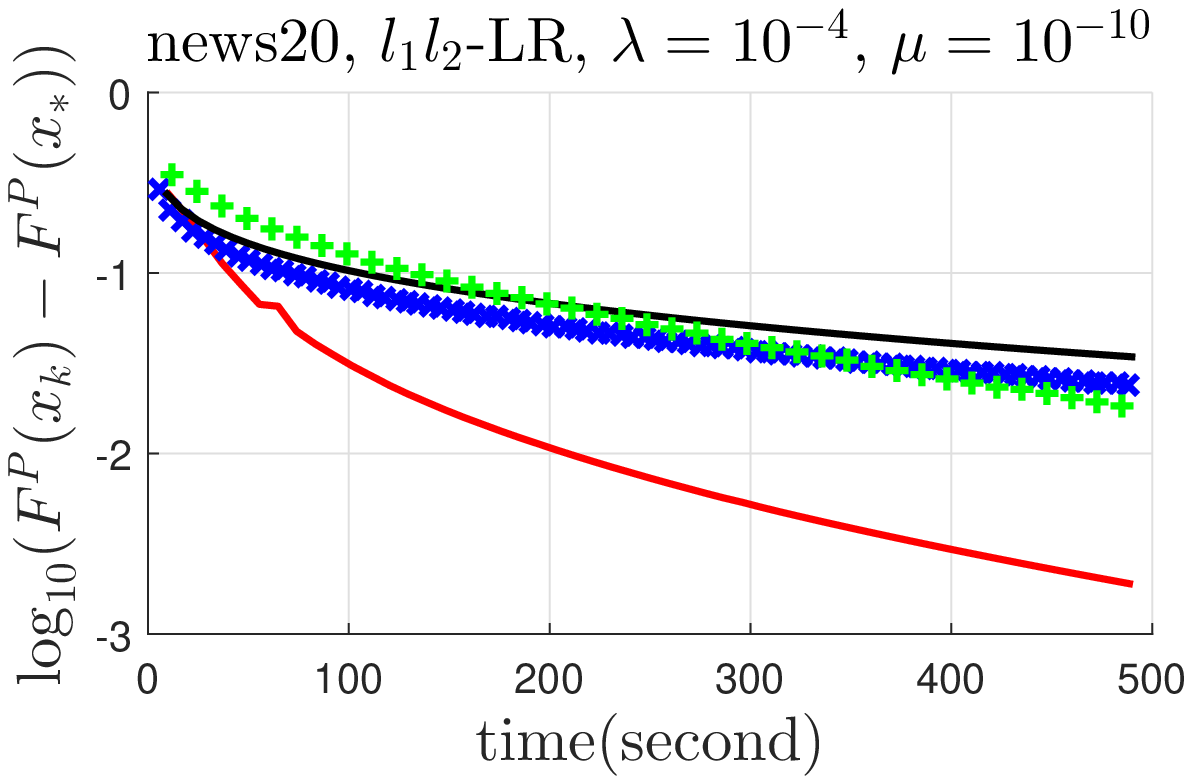}& 
			\includegraphics[width = .22\columnwidth]{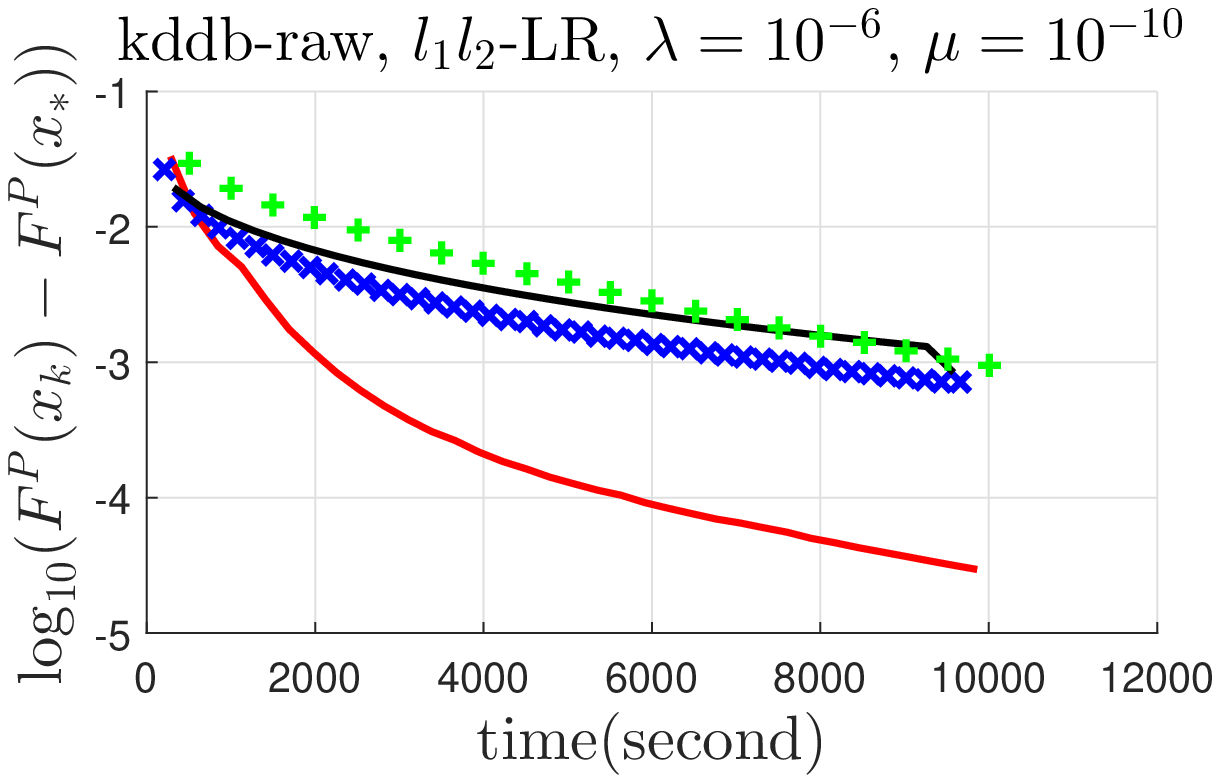}& 
			\includegraphics[width = .22\columnwidth]{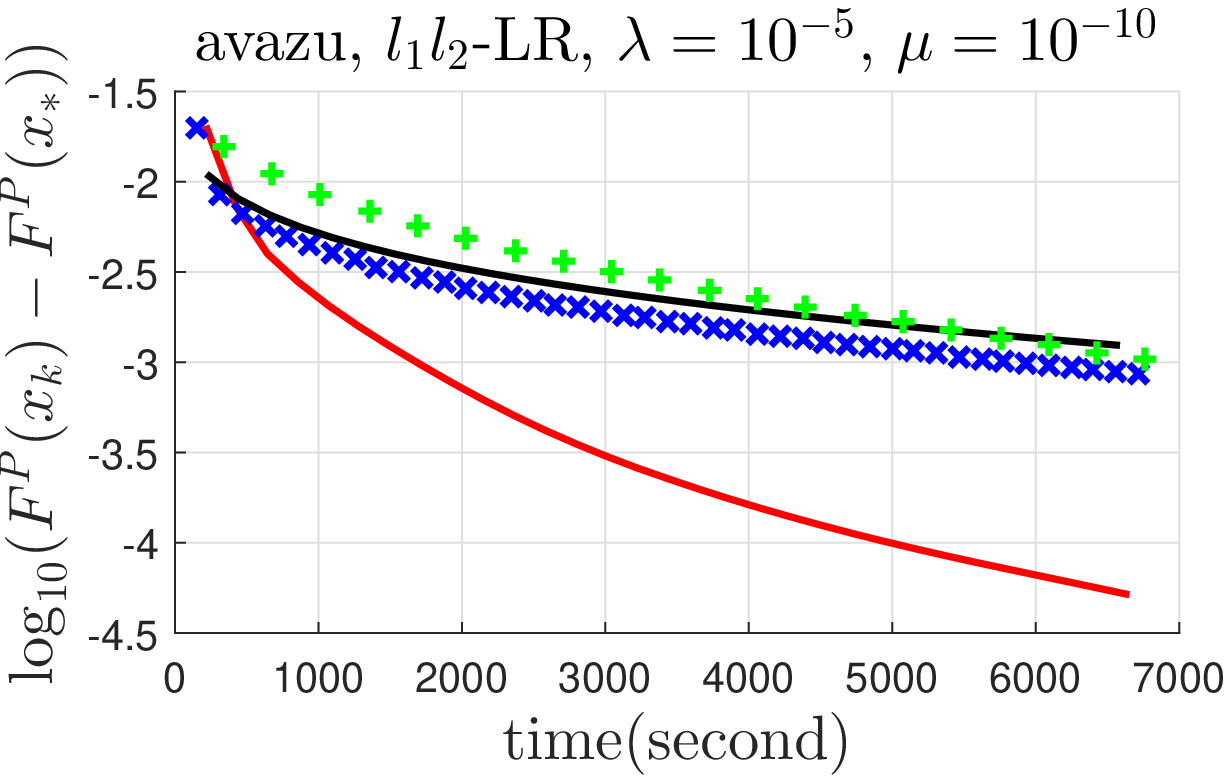}&
			\includegraphics[width = .22\columnwidth]{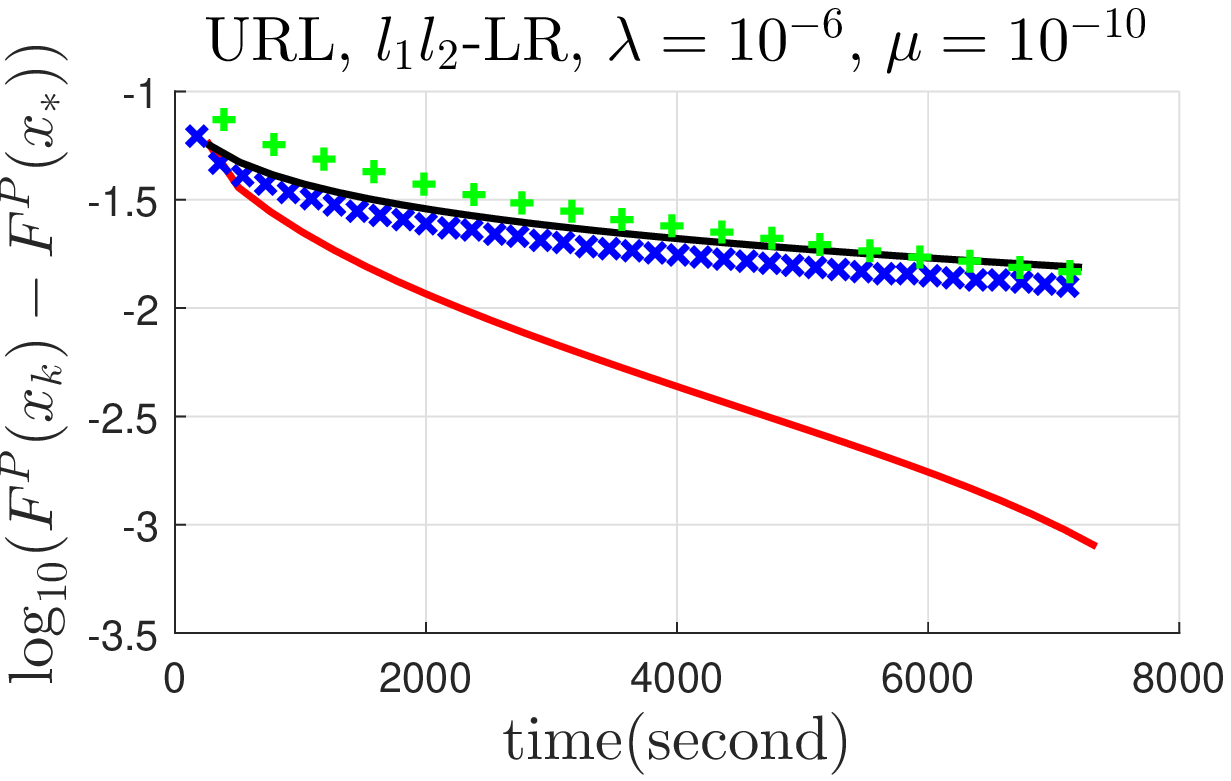}
			\\
			\includegraphics[width = .22\columnwidth]{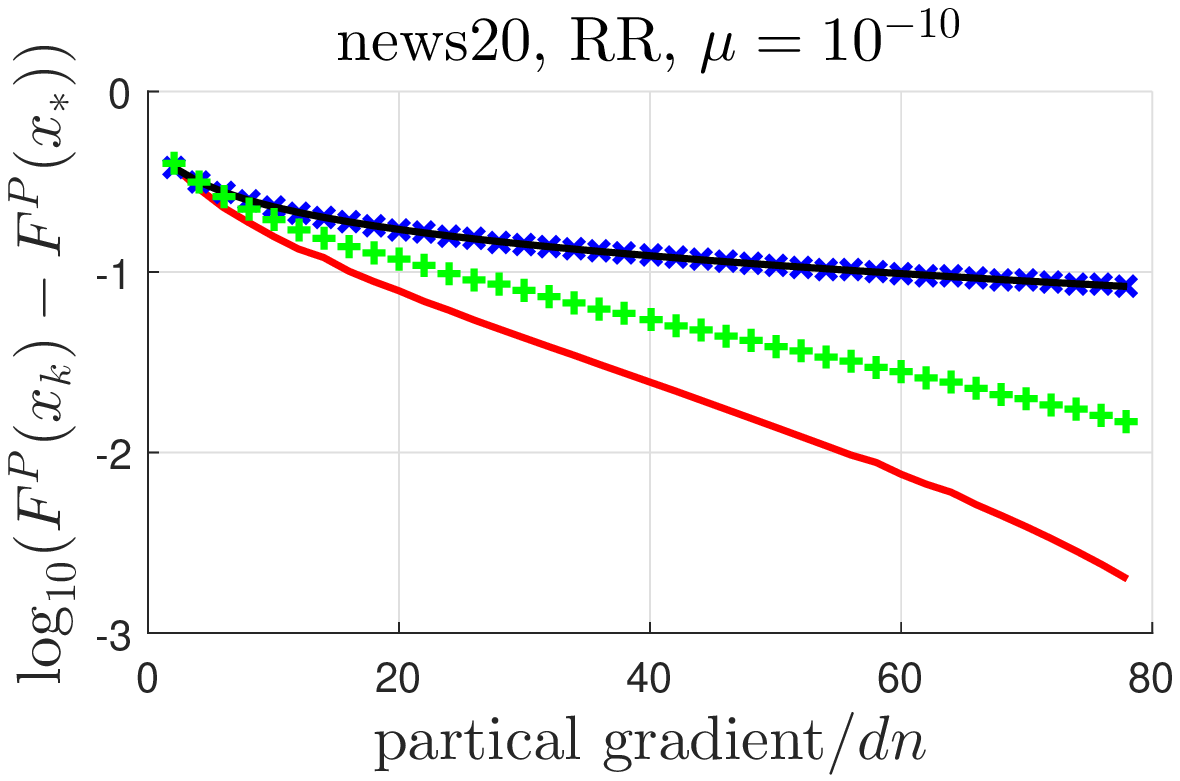}& 
			\includegraphics[width = .22\columnwidth]{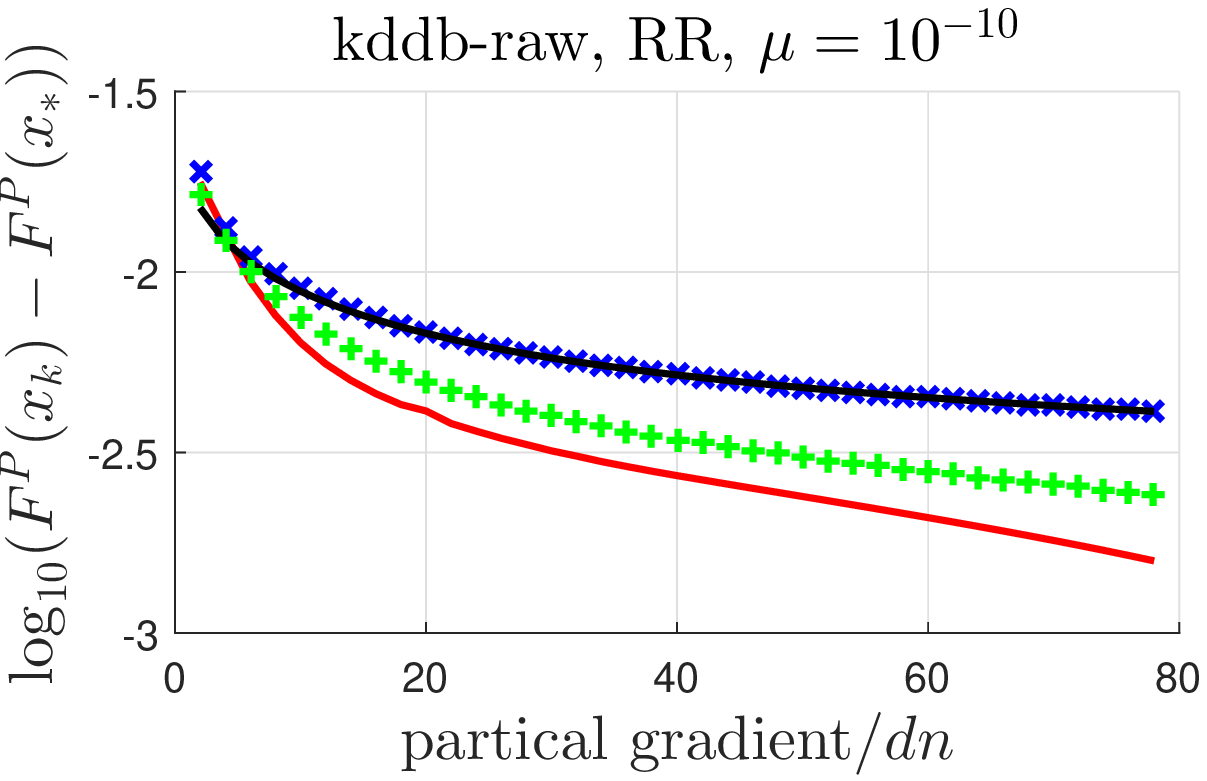}& 
			\includegraphics[width = .22\columnwidth]{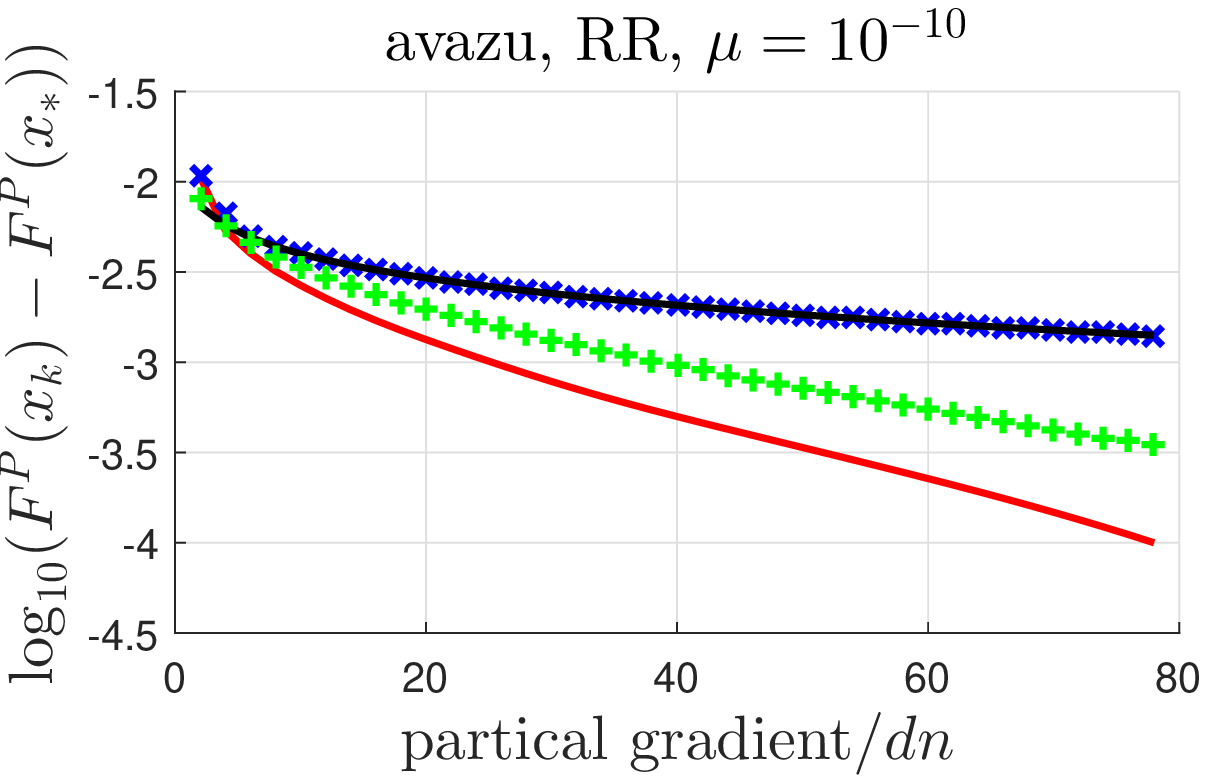}&
			\includegraphics[width = .22\columnwidth]{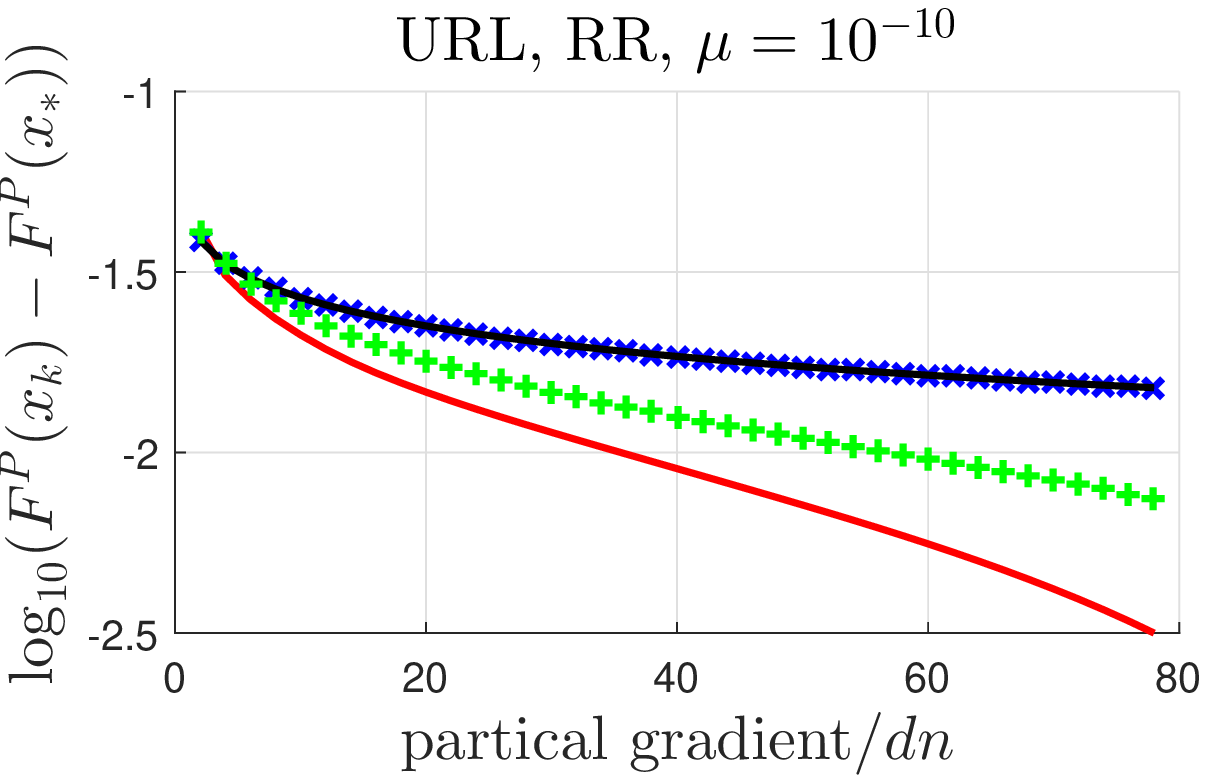}
			\\
			\includegraphics[width = .22\columnwidth]{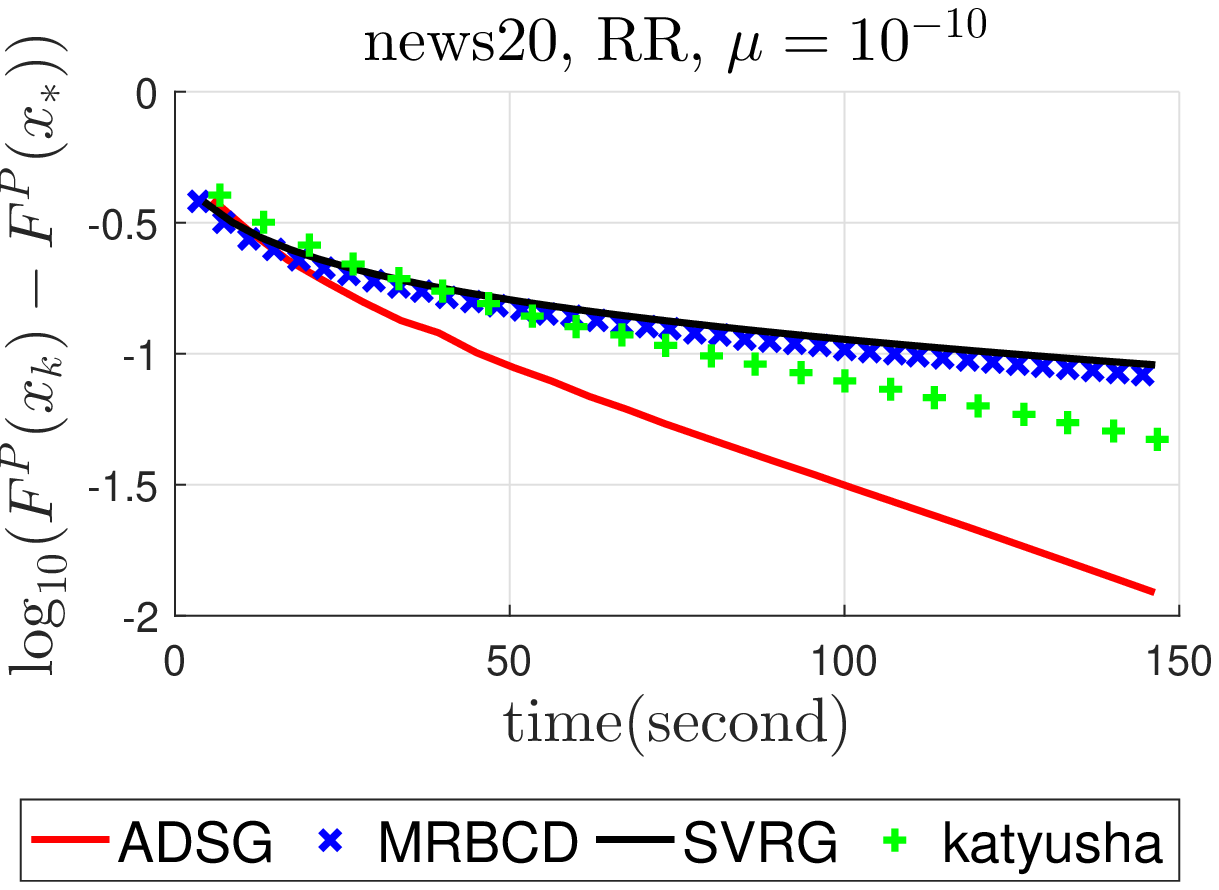}& 
			\includegraphics[width = .22\columnwidth]{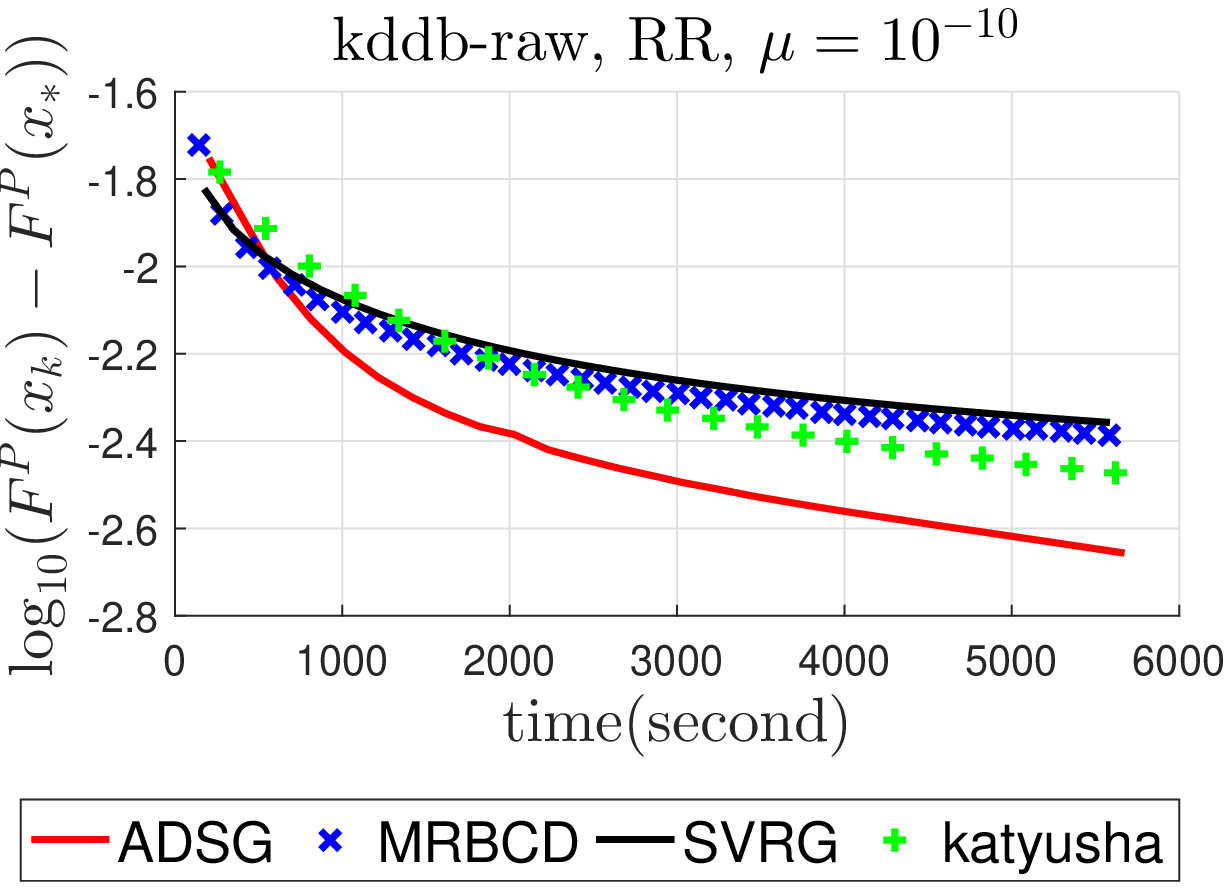}& 
			\includegraphics[width = .22\columnwidth]{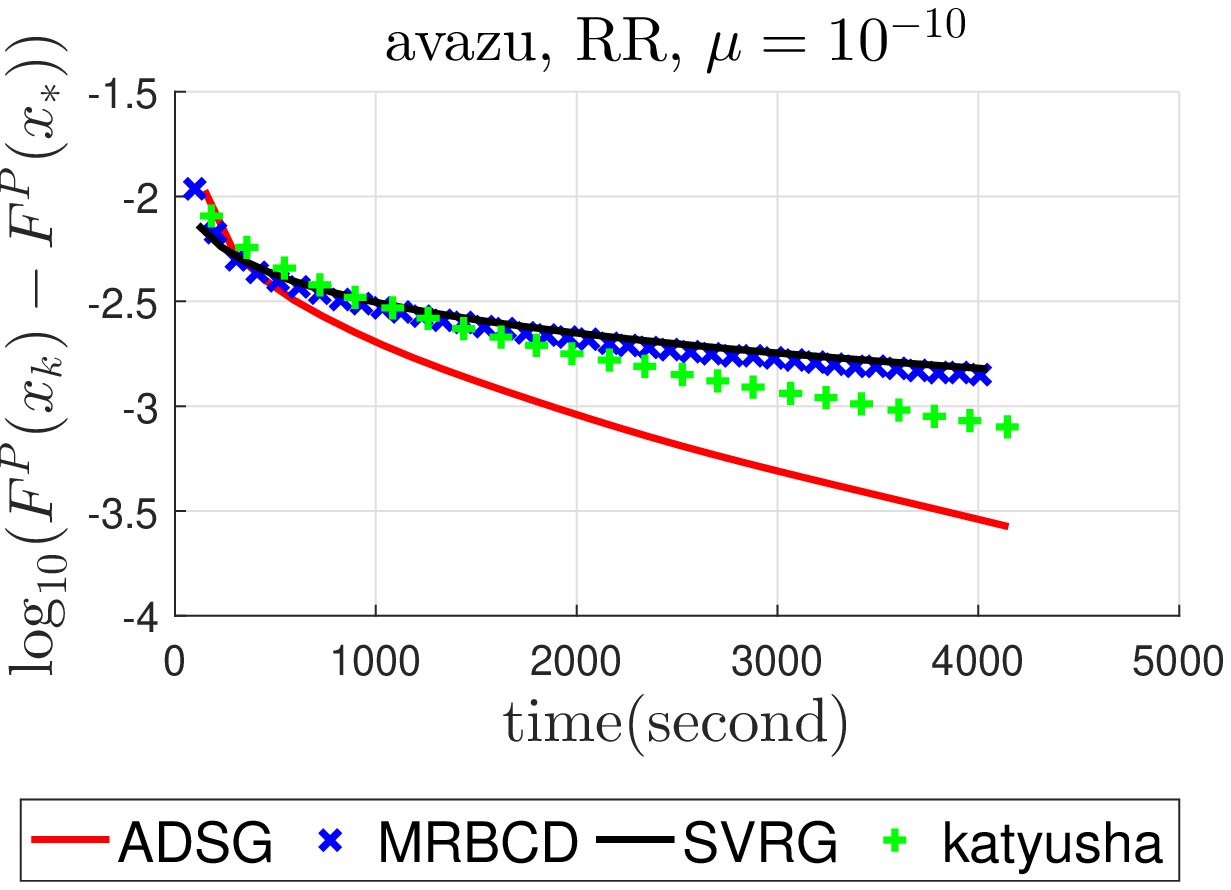}&
			\includegraphics[width = .22\columnwidth]{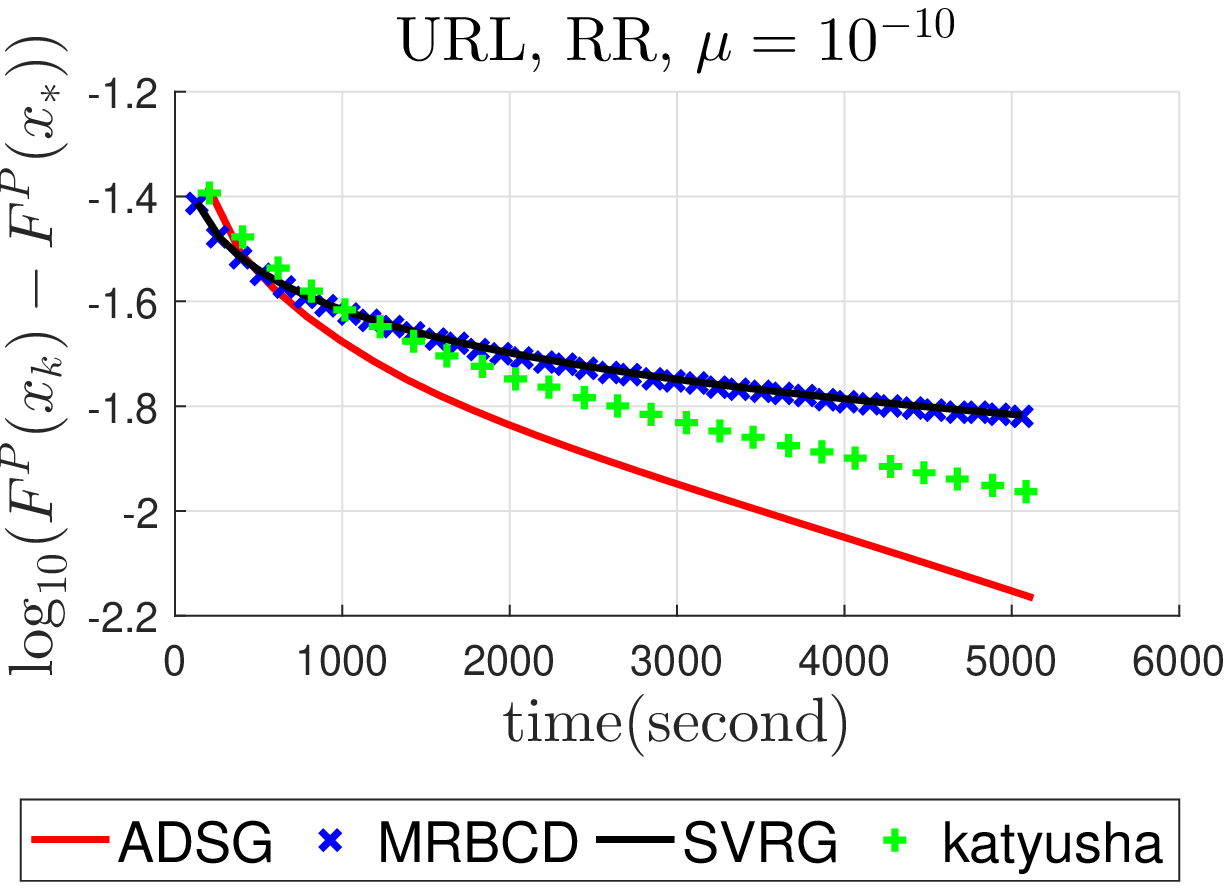}
		\end{tabular}
		\caption{From left to right are the results on news20-binary, kddb-raw, avazu-app, and url-combined.}
		\label{fig_result}
	\end{figure}
	In this section, we conduct several experiments to show the time efficiency of ADSG on huge-scale real problems.\\
	\noindent{\bf Problems} We conduct experiments of several ERM problems using different regularization functions.
	For smooth ERM loss, we use logistic regression and least square regression.
	For non smooth ERM loss, we use SVM and $l_1$ regression.
	We use $\lambda_1\cdot\|\cdot\|_1$, $\lambda_2/2\|\cdot\|_2^2$, and $\lambda_1\cdot\|\cdot\|_1+\lambda_2/2\cdot\|\cdot\|_2^2$ for regularization.
	Four large scale datasets from LibSVM \clr{\cite{CC01a}} are used: kdd2010-raw, avazu-app, new20.binary, and url-combined.
	Their statistics are given in Table \ref{table: statistics}.\\
	\noindent{\bf Algorithms} Katyusha~\clr{\cite{allen2017katyusha}}, MRBCD \clr{\cite{zhao2014accelerated}} (ASBCD has similar performance), and SVRG \clr{\cite{johnson2013accelerating}} are included for comparison.
	MRBCD and ADSG adopts the same block parameter $B$.
	All methods use the same mini batch size $b$.
	We use the default inner loop count described in the original paper for SVRG, MRBCD, and Katyusha.
	We tune the step size to give the best performance.
	For SVRG and MRBCD, it is usually $1/L$.
	\subsection{\bf $l_1$-Logistic Regression}
	In this problem, we set {\small $f_i(\xB) = \log(1+\exp(-y_i\aB_i^\top\xB))$} and set {\small $\PB(\xB) = \lambda\|\xB\|_1$}, where {\small $\{\aB_i, y_i\}_{i=1}^n$} are data points.
	We present the accuracy vs. Evaluated Partial Gradients (EPG) and accuracy vs. time in the first two rows of Figure \ref{fig_result}, along with the parameter $\lambda$ used in experiments.
	The result shows that (i) Katyusha and ADSG have superior convergence rate over non-accelerated SVRG and MRBCD, (ii) ADSG and MRBCD, as doubly stochastic methods, enjoy a better time efficiency than Katyusha and SVRG, and (iii) ADSG has the best performance among all competitors.
	\begin{figure}
		\centering
		\begin{tabular}{c c c}
			\includegraphics[width = .3\columnwidth]{l1_1_epoch.eps}& 
			\includegraphics[width = .3\columnwidth]{l1_2_epoch.eps}&
			\includegraphics[width = .3\columnwidth]{l1_3_epoch.eps}\\
			\includegraphics[width = .3\columnwidth]{l1_4_epoch.eps}&
			\includegraphics[width = .3\columnwidth]{l1_1_time.eps} & 
			\includegraphics[width = .3\columnwidth]{l1_2_time.eps} \\ 
			\includegraphics[width = .3\columnwidth]{l1_3_time.eps} &
			\includegraphics[width = .3\columnwidth]{l1_4_time.eps}
		\end{tabular}
	\end{figure}
	\subsection{$l_1l_2$-Logistic Regression}
	$f_i(\xB)$ is $\log(1+\exp(-y_i\aB_i^\top\xB))$	and $\PB(\xB)$ is $\frac{\mu}{2}\|\xB\|^2 + \lambda\|\xB\|_1$ in $l_1l_2$-logistic regression.
	The accuracy vs. EPG and accuracy vs. time plots are given in the third and fourth rows of Figure \ref{fig_result}.
	We fix $\mu = 10^{-10}$ and use the same $\lambda$ in the $l_1$-logistic regression.
	We observe similar phenomenon here as in the previous experiment.
	Additionally, all methods converge faster in this case, since the problem is strongly convex.
	\subsection{Ridge Regression}
	We set $f_i(\xB) = \frac{1}{2}\|\aB_i^\top\xB-\yB_i\|^2$ and $\PB(\xB) =\frac{\mu}{2}\|\xB\|^2$ in this experiment.
	In the last two rows of Figure \ref{fig_result}, we give the accuracy vs. EPG and accuracy vs. time plots.
	We fix $\mu = 10^{-10}$ in all datasets.
	ADSG shows exceptional computational efficiency in this experiment.\\

	\subsection{Solving Non-smooth ERM problem}
	We conduct experiments on non-smooth ERM problem with linear predictor in this section.
	Specifically, we assume that $f_i(\xB) = \phi(\langle \aB_i, \xB \rangle)$.
	To use the reduction methods AdaptSmooth and JointAdaptSmoothReg mentioned earlier, we define the auxiliary function
	\begin{equation}
		\phi^{(\lambda)}(z) = \max_\beta z\cdot\beta - \phi^*(\beta) - \frac{\lambda}{2}\beta^2
	\end{equation}
	where $\phi_i^*(\beta) = \max_\alpha \alpha\beta - \phi_i(\alpha)$.
	We consider two popular problem where $\phi^{(\lambda)}$ admits a closed form.
	\subsection{Least Absolute Deviation}
	$f_i(\xB) = \phi_i(\langle a_i, \xB \rangle)$, with $\phi_i(y) = |y-y_i|$.
	$\phi_i^*(\beta) = \beta\cdot y_i + 1_{|\beta|\leq 1}$ and \[\phi_i^{(\lambda)}(\alpha) = \begin{cases}
	\alpha - y_i - \lambda/2, ~ &\alpha > y_i + \lambda, \\
	(\alpha - y_i)^2/2\lambda, ~ &|\alpha - y_i| \leq \lambda, \\
	y_i - \alpha - \lambda/2, ~ &\alpha < y_i - \lambda.
	\end{cases}\]
	Hence we have
	\[\partial \phi_i^{(\lambda)}(\alpha) = \begin{cases}
	1, ~ &\alpha > y_i + \lambda, \\
	(\alpha - y_i)/\lambda, ~ &|\alpha - y_i| \leq \lambda, \\
	-1, ~ &\alpha < y_i - \lambda.
	\end{cases}\]
	
	\subsection{Support Vector Machine}
	$f_i(\xB) = \phi_i(\langle a_i, \xB \rangle)$, with $\phi_i(\alpha) = \max\{0, 1-y_i \alpha\}$.
	$\phi_i^*(\beta) = \beta\cdot y_i + 1_{-1\leq y_i\beta \leq 0}$ and \[\phi_i^{(\lambda)}(\alpha) = \begin{cases}
	0, ~ &y_i\alpha > 1, \\
	(y_i\alpha - 1)^2/2\lambda, ~ &-\lambda + 1\leq y_i\alpha \leq 1, \\
	1 - y_i\alpha - \lambda/2, ~ &y_i\alpha <  - \lambda+1.
	\end{cases}\]
	Hence we have
	\[\partial \phi_i^{(\lambda)}(\alpha) = \begin{cases}
	0, ~ &y_i\alpha > 1, \\
	(\alpha - y_i)/\lambda, ~ &-\lambda + 1\leq y_i\alpha \leq 1, \\
	-y_i, ~ &y_i\alpha <  - \lambda+1.
	\end{cases}\]
	\section{Conclusion}
	An accelerated doubly stochastic algorithm called ADSG is proposed in this paper.
	We give its convergence analyses, and compare our algorithm to the state-of-the-art in large scale ERM problems.
	The result is promising.
	
\bibliographystyle{plainnat}
\bibliography{avrbcd}
\clearpage

\end{document}